\newcommand{\cmark}{\ding{51}}%
\newcommand{\xmark}{\ding{55}}%
\newtheorem{theorem}{Theorem}
\newtheorem{definition}{Definition}
\newtheorem{corollary}[theorem]{Corollary}
\def\a{\mathbf{a}}
\def\q{\mathbf{q}}
\def\z{\mathbf{z}}
\def\x{\bm{x}}
\def\y{\mathbf{y}}
\def\W{\mathbf{W}}
\def\w{\textbf{w}}
\definecolor{LightCyan}{rgb}{0.6,0.6,0.5}
\definecolor{Gray}{gray}{0.85}
\definecolor{Gray2}{gray}{0.75}
\definecolor{Gray3}{gray}{0.65}
\definecolor{Gray4}{gray}{0.55}
\definecolor{Gray5}{gray}{0.45}
\definecolor{orange01}{HTML}{FC8D59} 
\definecolor{blue01}{HTML}{92BFDB}
\def\w{\boldsymbol{W}}
\def\a{\mathbf{a}}
\def\d{\mathbf{d}}
\newcommand{\rowstyle}[1]{\gdef\currentrowstyle{#1}#1\ignorespaces}
\newcolumntype{+}{>{\global\let\currentrowstyle\relax}}
\newcolumntype{^}{>{\currentrowstyle}}
\def\a{\mathbf{a}}
\def\x{\boldsymbol{x}}
\def\y{\boldsymbol{y}}
\def\a{\mathbf{a}}
\def\b{\mathbf{b}}
\def\z{\mathbf{z}}
\def\u{\boldsymbol{u}}
\def\w{\mathbf{W}}
\def\X{\mathbf{X}}
\def\q{\boldsymbol{q}}
\def\a{\mathbf{a}}
\def\a{\mathbf{a}}
\def\x{\boldsymbol{x}}
\def\X{\boldsymbol{X}}
\def\0{\boldsymbol{0}}
\def\x{\boldsymbol{x}}
\def\y{\boldsymbol{y}}
\def\z{\boldsymbol{z}}
\def\a{\boldsymbol{a}}
\def\w{\boldsymbol{w}}
\def\X{\boldsymbol{X}}
\def\W{\boldsymbol{W}}
\begin{document}
%
\title{Hyper-Compression: Model Compression via Hyperfunction}

%
%
%
%

\author{Feng-Lei Fan$^\dag$, Juntong Fan$^\dag$, Dayang Wang$^\dag$, Jingbo Zhang, Zelin Dong, Shijun Zhang, Ge Wang, Tieyong Zeng$^*$

\IEEEcompsocitemizethanks{
\IEEEcompsocthanksitem Feng-Lei Fan, Juntong Fan, and Dayang Wang are co-first authors.

\IEEEcompsocthanksitem Tieyong Zeng (zeng@math.cuhk.edu.hk) is the corresponding author.

\IEEEcompsocthanksitem Feng-Lei Fan, Juntong Fan, Dayang Wang, and Jingbo Zhang are with Department of Data Science, City University of Hong Kong, China SAR. 
\IEEEcompsocthanksitem 
Shijun Zhang is with Department of Applied Mathematics, The Hong Kong Polytechnic University, China SAR.
\IEEEcompsocthanksitem 
Ge Wang is with Department of Biomedical Engineering, Rensselaer Polytechnic Institute, NY, US.
\IEEEcompsocthanksitem 
Tieyong Zeng is with Department of Mathematics, The Chinese University of Hong Kong, China SAR.
}
}

\markboth{Journal of \LaTeX\ Class Files,~Vol.~14, No.~8, April~2025}%
{Shell \MakeLowercase{\textit{et al.}}: Bare Demo of IEEEtran.cls for Computer Society Journals}
%



\IEEEtitleabstractindextext{%
\begin{abstract}
The rapid growth of large models' size has far outpaced that of computing resources. To bridge this gap, encouraged by the parsimonious relationship between genotype and phenotype in the brain's growth and development, we propose the so-called Hyper-Compression that turns the model compression into the issue of parameter representation via a hyperfunction. Specifically, it is known that the trajectory of some low-dimensional dynamic systems can fill the high-dimensional space eventually. Thus, Hyper-Compression, using these dynamic systems as the hyperfunctions, represents the parameters of the target network by their corresponding composition number or trajectory length. This suggests a novel mechanism for model compression, substantially different from the existing pruning, quantization, distillation, and decomposition. Along this direction, we methodologically identify a suitable dynamic system with the irrational winding as the hyperfunction and theoretically derive its associated error bound. Next, guided by our theoretical insights, we propose several engineering twists to make the Hyper-Compression pragmatic and effective. Lastly, systematic and comprehensive experiments on \textcolor{black}{NLP models such as LLaMA and Qwen series and vision models} confirm that Hyper-Compression enjoys the following \textbf{PNAS} merits: 1) \textbf{P}referable compression ratio; 2) \textbf{N}o post-hoc retraining; 3) \textbf{A}ffordable inference time; and 4) \textbf{S}hort compression time. It compresses LLaMA2-7B in an hour and achieves close-to-int4-quantization performance, without retraining and with a performance drop of less than 1\%. We have open-sourced our code in \url{https://github.com/Juntongkuki/Hyper-Compression.git} for free download and evaluation.

\end{abstract}

\begin{IEEEkeywords}
Large Models, Model Compression, Hyper-Compression, Dynamic System
\end{IEEEkeywords}}

\maketitle

\IEEEdisplaynontitleabstractindextext

%
\IEEEpeerreviewmaketitle

\IEEEraisesectionheading{\section{Introduction}\label{sec:introduction}}
\vspace{-0.2cm}

%
%
%
%

\IEEEPARstart{R}{ecently}, due to the pursuit of the scaling law, the escalating demand for computational resources by large models has presented formidable challenges for their deployment in resource-constrained environments \cite{ding2023parameter}. For example, a GPT-3 model has 175 billion parameters with a size of approximately 700 GB, while one of the most advanced GPUs (NVIDIA H100) has a memory capacity of only up to 80 GB. Serving large models well has become a strong technological imperative. To this end, currently, one prevalent way is to develop effective model compression approaches that crop the size of models while maintaining the performance.

Model compression \cite{zhu2023survey} predominantly revolves around four different classes of algorithms: pruning, quantization, knowledge distillation, and low-rank decomposition. All these algorithms were already proposed years or even decades ago. Though these techniques play a critical role in model compression, they face intrinsic challenges in the era of large models that are hard to overcome. First, the compression efficacy of some methods such as pruning and quantization is challenging to scale. A plethora of studies showed that pruning is submitted by the suboptimal compression rate \cite{zhu2023survey}, usually $2-4\times$. As for quantization, even the best it can do (1-bit quantization) remains unexciting because the compression ratio is bounded by a constant, which is handicapped to address the enlarged gap between models and the hardware resource in the era of large models. Moreover, when compressing a model at a large rate, the model's output is usually severely distorted. Thus, retraining and recalibration are mandated to recover the model's performance, which adds extra costs for curating data and computational resources. This not only is unfriendly for the industry that often favors agile deployment but also may introduce bias in the model. Observing these fundamental limits, we ask \textit{can we have a more promising technology roadmap, instead of keeping modifying the existing methods?}
The answer is affirmative. Here, we introduce Hyper-Compression, a novel and general-purpose approach that redefines model compression as a problem of parsimonious parameter representation. This concept stems from hyper-networks that are smaller networks but can generate weights for a significantly larger target network \cite{stanley2009hypercube, chauhan2024brief}, in analogy to the famous genomic bottleneck that a comparatively small genome can control the growth and development of a complicated brain \cite{shuvaev2024encoding}. We generalize the concept of hypernets into a ‘hyperfunction' (HF) and hypothesize that weights of a large network can be encoded by a function with few parameters. Mathematically, we use a parametric function $w_n=h(\theta;n)$ to encode the relationship between locations and weights of the target network, where $w_n$ is the $n$-th parameter of the target network, and $\theta$ collects the parameters of $h$. Should the memory footprint of $\theta$ be significantly less than that of $\{w_n\}_{n=1}^N$, $\theta$ can be stored on devices as a kind of compression for network parameters. During inference, $\theta$ is freely used to layer-by-layer recover weights of the original network via $h$.
\begin{figure}[h]  
\centering
\includegraphics[width=1\linewidth]{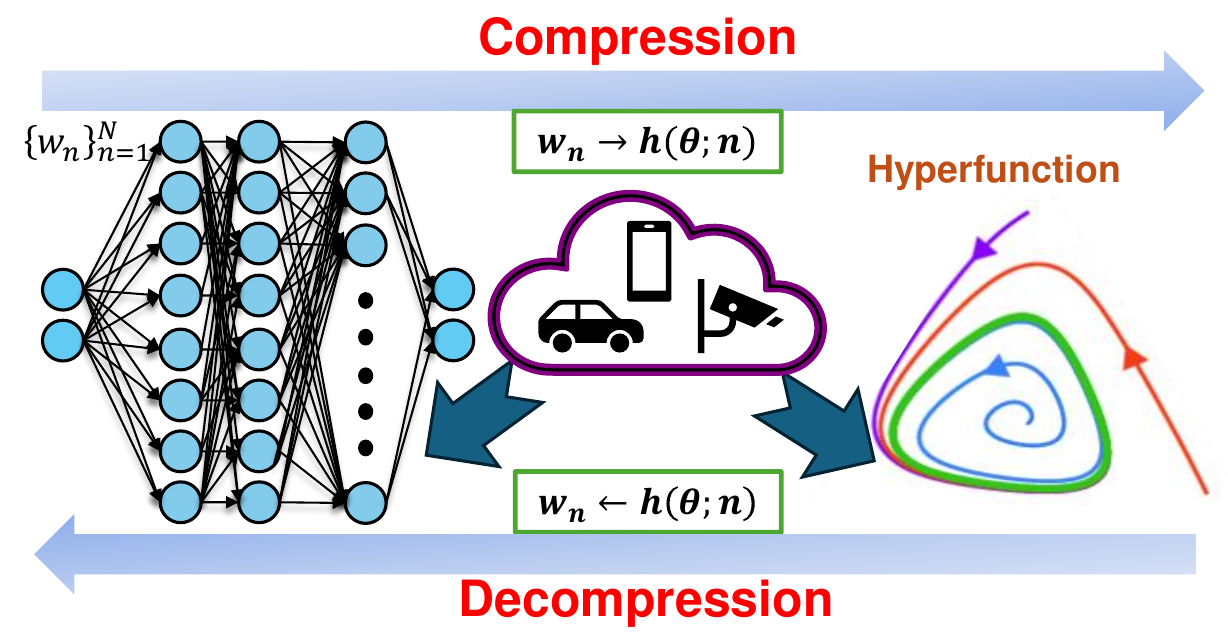}
\caption{We use a parametric function $w_n=h(\theta;n)$ to encode the relationship between locations and weights of the target network, where $w_n$ is the $n$-th parameter of the target network. $h$ is implied by the ergodic theory. When inference, parameters of the original networks are restored layer-by-layer.} 
\label{Hyper-Compression}
\end{figure}

Hereafter, we refer to model compression via the hyperfunction as the Hyper-Compression. Along this direction, we reasonably underscore that finding a suitable hyperfunction is an open-ended question. However, unexpectedly, we propose to leverage the density of trajectory in ergodic theory \cite{cornfeld2012ergodic} to construct the hyperfunction $h$. One of the most important findings in ergodic theory is that when the transformation $T$ is ergodic, a low-dimensional dynamic system can eventually visit a high-dimensional space given sufficient time. Hereafter, we refer to this property as the trajectory density. Informally, there exists $\x_0$ such that $\forall \x$ and $\epsilon>0$, $\exists k$ such that
\begin{equation}
\|\x -\underbrace{(T\circ T\circ\cdots\circ T)}_{k}(\x_0)\|=\|\x - T^k(\x_0)\|<\epsilon.
\end{equation}
Thus, the hyperfunction $h=T^k(\cdot)$ and we can “compress" a high-dimensional vector $\x$ into a number $k$, which should induce a good compression rate. Next, as the initial step, we use the irrational winding \cite{katok1995introduction} as the suitable dynamic system to validate the feasibility of the proposed framework for model compression.

Then, we theoretically derive the error bound of the Hyper-Compression based on the irrational winding, thereby characterizing the key ingredients that affect the error. Next, guided by theoretical insight, we propose several pragmatic engineering twists such as scaling to cope with outliers appropriately, adjusting irrational direction to diminish the error, and applying the KD-Tree \cite{zhou2008real} for fast compression. These twists lead to an effective and efficient model compression algorithm. Specifically, it offers four distinct characters, succinctly summarized as \textbf{PNAS}: 1) \textbf{P}referable compression ratio; 2) \textbf{N}o post-hoc retraining\&recalibration; 3) \textbf{A}ffordable inference time; and 4) \textbf{S}hort compression time. Our work can facilitate the harmony between the scaling law and the stagnation of hardware upgradation in terms of saving both computation and data. More favorably, our method is seamlessly compatible with other compression methods such as pruning, distillation, and low-rank decomposition to further amplify the compression efficacy. Systematic and comprehensive experiments confirm the efficacy of Hyper-Compression. In summary, our contributions are threefold:

i) We propose to use the trajectory density in the mathematical ergodic theory to realize a neural network compression, referred to as Hyper-Compression. This idea is not a modification of the existing approaches but a fundamental innovation in model compression research. To the best of our knowledge, this is the first time that ergodic theory is used in model compression, which creates many new research and translation possibilities.
    
ii)  In this new framework, we propose to cast the irrational winding as the suitable ergodic transformation, derive its error bound theoretically, and design engineering twists to make this new kind of model compression algorithm pragmatic.
    
iii)  Systematic and comprehensive experiments on large models such as the LLaMA \cite{touvron2023llama} series, {\color{black}Qwen \cite{ahmed2025qwen} series} and small models such as {\color{black}ResNet \cite{he2016deep} series}, UNet \cite{ronneberger2015unet} and Mobile-Net \cite{howard2019searching} confirm the competitiveness of the proposed Hyper-Compression, which is friendly to agile deployment in both academia and industry.


\section{Related Work}

\subsection{Model Compression}
Model compression \cite{zhu2023survey} plays a crucial role in the widespread deployment of deep learning models in a myriad of different settings. There are mainly four different classes of techniques including \textit{pruning}, \textit{quantization}, \textit{low-rank decomposition}, and \textit{knowledge distillation}. As Table \ref{tab:principle} shows, we highlight that Hyper-Compression is a fundamentally different method that performs the low-dimensional transformation for the target parameters. Such an essential difference can open a lot of doors for research and translation opportunities. We summarize model compression techniques below, accompanied by the recent advances in compressing large language models. Due to the limit of pages, we can only cover a few representative works. Moreover, because our proposed Hyper-Compression here focuses on post-training compression, and knowledge distillation is to a large extent based on training, we do not review articles on knowledge distillation here.
\begin{table}[htbp]
\vspace{-0.2cm}
  \centering
  \caption{Hyper-Compression compresses a model based on an essentially different mechanism, which performs the low-dimensional transformation for the target data.}
  \scalebox{1.1}{
  \begin{tabular}{lc}
\Xhline{3\arrayrulewidth}
      Method & Core Principle   \\
     \hline
       Pruning &  Sparsity  \\
  Quantization &  Low-precision \\
    Low-rank Decomposition  & Low-rank \\
    Knowledge Distillation & Knowledge Transfer \\
    \textcolor{cyan}{Hyper-Compression} & \textcolor{cyan}{Low-dimensionality} \\
\Xhline{3\arrayrulewidth}
    \end{tabular}}%
    \label{tab:principle}
\end{table}

\textbf{Pruning} is a method to achieve sparsity in neural networks, which involves removing unimportant synapses, neurons, layers, and even blocks from a neural network. Therefore, a good amount of the pruning research is dedicated to design different evaluation metrics to find which part of a network is unimportant. Slight pruning can also lead to better generalization in addition to compression, while heavy pruning often needs meticulous retraining to avoid high performance loss. Pruning is divided into the unstructured and structured. Unstructured pruning usually results in an irregular model composition. To truly harvest gains in inference time and parameter saving, users need specialized storage and computation schemes.

Unstructured pruning targets individual parameters, without the need of considering the internal structures of a model. SparseGPT  \cite{frantar2023sparsegpt} turned the pruning problem into a set of extremely large-scale instances of sparse regression, thereby avoiding the inversion of each matrix. Thus, SparseGPT could compress models of 10-100 billion parameters in just a few hours. Wanda \cite{sun2023simple} simultaneously considers weights and activations in pruning. This technique was motivated by an observation in \cite{dettmers2208llm2022} that a small subset of hidden features are substantially large in magnitude. Therefore, they augmented the standard weight magnitude with the input activations to evaluate the importance of weight. 

Structured pruning usually removes the entire neurons, filters, and blocks of a network, thereby leading to realistic compression and acceleration. Unlinke the conventional methods, EBert \cite{liu2021ebert} incorporated a predictor to dynamically pinpoint and remove unimportant heads in multi-head self-attention layers and unimportant structured computations in fully-connected networks, respectively, when inferring each batch of samples. LLM-shearing \cite{xia2023sheared} found that the pruned model has an imbalanced performance across different domains and tasks, and proposed to dynamically load data batch from each domain in proportion to its rate of performance reduction in that domain. K-Prune \cite{park2023accurate}, a retraining-free structured pruning method, used the knowledge loss to measure the importance of heads in transformers, where the knowledge loss is defined as the faithfulness to the soft labels of the original model. FLAP \cite{an2024fluctuation} found that certain channels of hidden state features exhibit structured sample stability, and designed a structured pruning metric to identify whether the output feature map is easy to recover when a column of weight matrix is eliminated. Then, FLAP adds an additional bias term to recover the output feature.

\textbf{Quantization} reduces the precision of weights and activations in a neural network by turning the high bit-width numbers into lower bit-width ones. Quantization can directly diminish memory usage and accelerate computations by using fixed-point arithmetic. It enables faster inference on hardware with limited computational resources. Quantization entails quantization-aware training (QAT  \cite{liu2023llm}) and post-training quantization (PTQ \cite{huang2024billm}). The former emphasizes the combination of quantization and training.  
The latter quantizes parameters when the network training is completed, which does not require modifications to the network structure but may induce the extra cost of retraining. Therefore, here we mainly discuss PTQ, which is divided into weight quantization and weight\&activation quantization.

LLM.int8() \cite{dettmers2208llm2022} initially employs vector-wise quantization alongside distinct normalization constants for every inner product within the matrix multiplication to quantize most of the features. Moreover, LLM.int8() denotes outlier feature dimensions by a 16-bit matrix multiplication, while over 99.9\% of values are still multiplied in 8-bit. OPQ \cite{frantar2022optimal} is the layer-wise compression method, which performs quantization layer-by-layer and minimizes the error between the pre-activation by the original full-precision and the quantized weight matrix in each layer. SmoothQuant \cite{xiao2023smoothquant} is a training-free, accuracy-preserving, and general-purpose post-training quantization solution. It smoothens the activation outliers by a mathematical scaling transformation before the normal quantization:
\begin{equation}
    Y= (X\texttt{diag}(s)^{-1})\cdot (\texttt{diag}(s)^{-1}W),
\end{equation}
where $X$ is the activation and $W$ is the weight matrix. AWQ \cite{lin2024awq} highlights that not all weights are equally crucial for large language model (LLM) performance. Only 0.1\%-1\% of weights are significant; bypassing quantization for these salient weights can greatly reduce quantization loss. To identify these salient channels, they examined activation distributions rather than weight distributions. To avoid inefficient mixed-precision implementations, they analyzed weight quantization errors and determined that scaling up salient channels can decrease their relative quantization error. Atom \cite{zhao2024atom} also designs a customized CUDA kernel that utilizes low-bit tensor cores, besides the common quantization operations such as mixed precision and grouped quantization. Atom improves end-to-end throughput (token/s) by up to $7.73$ times compared to the FP16.

\textbf{Low-rank decomposition} involves approximating weight matrices or tensors in a neural network by decomposing them into low-rank ones. This technique reduces the number of parameters in the model, leading to a more compact representation.

Low-Rank Adaptation (LoRA, \cite{hulora}) posits that the adjustments in weights during model adaptation exhibit a low "intrinsic rank." This approach enables the training of certain dense layers in a neural network indirectly by optimizing the gradients into a low-rank decomposition matrix in those layers during adaptation, while keeping the pre-trained weights unchanged. TensorGPT \cite{xu2023tensorgpt} introduced the approach of tensorizing and decomposing each token embedding, rather than treating the entire embedding matrix as a single entity. It was the first method to apply the tensor train decomposition for model compression, achieving a reduction in the number of parameters by a factor of 2.31.

\subsection{Implicit Neural Representation}

Implicit Neural Representation (INR) \cite{park2019deepsdf} has emerged as a transformative approach in computer graphics, computer vision, and machine learning. This technique leverages neural networks to represent complex shapes and images without explicitly storing the geometry or pixel data. Early work in this area focused on using neural networks to learn mappings from coordinates to values. For instance, the pioneering work by \cite{park2019deepsdf} introduced the concept of using multi-layer perceptrons (MLPs) to represent 3D shapes as continuous functions, allowing for high-resolution representations without traditional mesh-based methods. Recently, the implicit neural representation is increasingly used for data compression, such as COIN \cite{dupont2021coin} and NERV++ \cite{ghorbel2024nerv++}.

Our Hyper-Compression can also be regarded as a kind of implicit representation, but Hyper-Compression is not based on a neural network. Moreover, to the best of our knowledge, no INR work is directly used for model compression. We think this is because when a model is large, learning a huge amount of parameters shall encounter problems such as slow convergence and considerable performance drop.

\subsection{Hypernet}

The concept of hypernetworks was first articulated by \cite{ha2017hypernetworks}, who proposed a framework where a neural network (the hypernetwork) generates the weights of another network. This meta-learning approach allows for dynamic adaptation, enabling a single hypernetwork to cater to multiple tasks by producing tailored weights. The hypernetwork architecture is particularly advantageous for scenarios where training multiple models is computationally expensive or infeasible. Hypernet as a meta-learning method has been successfully applied in a plethora of fields including few-shot learning \cite{rusumeta} and domain adaptation \cite{volk2022example}, to name a few.

Our Hyper-Compression is essentially different from hypernetworks in three dimensions. First, as mentioned earlier, Hyper-Compression is not based on a network. Instead, it generalizes the idea of hypernetwork to the hyperfunction. Second, hypernetworks highlight the control of the target network, while the Hyper-Compression is an inverse process that compresses weights of the target networks into fewer parameters of the hyperfunction. Third, Hyper-Compression is specific to model compression. In contrast, to the best of our knowledge, no hypernet is directly applied for model compression for reasons similar to INRs.


\section{Hyper-Compression and Ergodic Theory}

\subsection{Hyper-Compression} 
A small human genome can remarkably encode the development of a human brain to the scale of billions of neurons and trillions of synaptic connections \cite{stanley2009hypercube,shuvaev2024encoding}. This observation suggests the existence of an implicit mapping from genotype to phenotype, capable of expanding a limited number of genetic instructions into a vast array of biological substances. Inspired by this efficient genetic encoding, the hypernet \cite{chauhan2024brief} uses a small network (genotype, called hypernet) to control the design of the target network (phenotype) including architectures and weight distributions. 

Furthermore, we generalize the idea of hypernet to a parameterized function (hyperfunction). The biological observation is the existence of a mapping, regardless of whether it is denoted by a network or other forms of functions. We consider the hyperfunction in the setting of model compression. With a hyperfunction that can represent a large network with a few parameters, we ponder using the hyperfunction to do the model compression, \textit{i.e.}, one stores a hyperfunction about the model and queries its weights when needed. Mathematically, we use a hyperfunction $w_n=h(\theta;n)$ to encode the relationship between locations and weights of the target network, where $w_n$ is the $n$-th parameter of the target network, and $\theta$ is the hyperparameters of $h$. Instead of directly compressing weights, this novel perspective converts the model compression problem into the problems of finding their low-dimensional representation ($\theta$ has few elements than $\{w_n\}_{n=1}^N$). We refer to model compression using hyperfunction as the Hyper-Compression. Under the umbrella of Hyper-Compression, \textit{the important question is to design a suitable hyperfunction?}

\subsection{Ergodic Theory} 

We underscore that designing a suitable hyperfunction is an open-ended question. Generically, the selection of the hyperfunction should balance the compression time, computational complexity, and the compression rate. Unexpectedly, we find the connection between the hyperfunction and ergodic theory to address this question from a unique angle. A branch of ergodic theory \cite{cornfeld2012ergodic} studies on what conditions \textbf{a low-dimensional dynamic system's trajectory can cover all points in a high-dimensional space}. If a low-dimensional system's trajectory can cover all points in a high-dimensional space, from an engineering perspective, it is feasible to use low-dimensional curves to approximate high-dimensional points. Thus, we can use fewer parameters to denote more parameters via explicitly characterizing the target point with the low-dimensional dynamic system's trajectory. Now, we chart the pathway formally: 

\begin{definition}[\cite{cornfeld2012ergodic}]
A measure space is a triplet $(X, \mathscr{B}, \mu)$, where
1. $ X$  is a set. 2. $\mathscr{B}$ is a $\sigma$-algebra: a collection of subsets of $X$ which contains the empty set, and which is closed under complements, countable unions and countable intersections. The elements of $\mathscr{B}$ are called measurable sets.
3. $ \mu: \mathscr{B} \rightarrow[0, \infty]$, called the measure, is a $\sigma$-additive function: if $ E_{1}, E_{2}, \ldots \in \mathscr{B} $ are pairwise disjoint, then  $\mu\left(\bigcup_{i} E_{i}\right)=\sum_{i} \mu\left(E_{i}\right) $. If $\mu(X)=1$, then we say that $\mu $ is a probability measure and $ (X, \mathscr{B}, \mu) $ is a probability space.
\end{definition}

\begin{definition}[\cite{cornfeld2012ergodic}]
A measure-preserving transformation is a quartet $(X, \mathscr{B}, \mu, T) $, where $ (X, \mathscr{B}, \mu)$ is a measure space, and
1.  $T$  is measurable:  $E \in \mathscr{B} \Rightarrow T^{-1} E \in \mathscr{B} $;
2.  $m$  is  $T$ -invariant:  $\mu\left(T^{-1} E\right)=\mu(E)$  for all  $E \in \mathscr{B} $.
\label{def:mpt}
\end{definition}

\begin{definition}
 $T$  is said to be ergodic if and only if:
for all  $A \in \mathcal{B} $: $T^{-1}(A)=A \Longrightarrow \mu(A) \in\{0,1\}$.
\label{def:ergodicity}
\end{definition}

\begin{theorem}
If $T$ is ergodic, then 
for all  $A \in \mathcal{B}$:
$\mu(A)>0 \Longrightarrow \mu\left(\bigcup_{n=1}^{\infty} T^{-n}(A)\right)=1$.
\label{theorem_main}
\end{theorem}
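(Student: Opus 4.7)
The plan is to build from $A$ a \emph{strictly} $T$-invariant set of positive measure, then invoke the ergodicity condition (Definition \ref{def:ergodicity}) to conclude that its measure is $1$. Set $B := \bigcup_{n=1}^{\infty} T^{-n}(A)$, which is precisely the set whose measure we must show equals $1$. A direct index shift gives $T^{-1}(B) = \bigcup_{n=2}^{\infty} T^{-n}(A) \subseteq B$, so $B$ is only backward-invariant; in general it will not satisfy $T^{-1}(B) = B$ on the nose, and hence Definition \ref{def:ergodicity} cannot be applied to $B$ itself.

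To repair this, I would pass to the decreasing intersection $B_{\infty} := \bigcap_{k=0}^{\infty} T^{-k}(B)$. The chain $B \supseteq T^{-1}(B) \supseteq T^{-2}(B) \supseteq \cdots$ follows by iteratively applying $T^{-1}$ to $T^{-1}(B) \subseteq B$, so discarding the $k = 0$ term in the intersection leaves it unchanged. An index shift then yields $T^{-1}(B_{\infty}) = \bigcap_{k=1}^{\infty} T^{-k}(B) = B_{\infty}$, i.e., $B_{\infty}$ is strictly $T$-invariant in the sense demanded by Definition \ref{def:ergodicity}.

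Next I would compute $\mu(B_{\infty})$. Because $\mu$ is a probability measure and $\{T^{-k}(B)\}$ is a decreasing sequence with $\mu(T^{-k}(B)) = \mu(B)$ (by the $T$-invariance clause of Definition \ref{def:mpt}, applied $k$ times), continuity of measure from above gives $\mu(B_{\infty}) = \mu(B)$. To rule out $\mu(B_{\infty}) = 0$, observe that $T^{-1}(A) \subseteq B$ and $\mu(T^{-1}(A)) = \mu(A) > 0$ by hypothesis; hence $\mu(B_{\infty}) = \mu(B) > 0$. Ergodicity now forces $\mu(B_{\infty}) \in \{0, 1\}$, so $\mu(B_{\infty}) = 1$, and therefore $\mu(B) = 1$, as claimed.

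The main obstacle I anticipate is purely bookkeeping: one must verify that $T^{-1}(B_{\infty}) = B_{\infty}$ holds as an exact set equality and not merely modulo null sets, since the definition of ergodicity used in the excerpt is the strict form $T^{-1}(A) = A \Longrightarrow \mu(A) \in \{0,1\}$. The monotone decreasing property of $\{T^{-k}(B)\}$ is what makes the extra $k = 0$ term in the intersection redundant and delivers strict invariance. Beyond that, every remaining step is a routine invocation of $\sigma$-additivity, continuity of measure from above, and the measure-preservation axiom.
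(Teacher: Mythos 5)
Your proof is correct and follows essentially the standard argument that the cited ProofWiki page presents (the paper itself only refers out to that source without reproducing a proof). Passing from the merely backward-invariant $B=\bigcup_{n\ge 1}T^{-n}(A)$ to the strictly invariant $B_\infty=\bigcap_{k\ge 0}T^{-k}(B)$, and then using measure-preservation plus continuity from above to see $\mu(B_\infty)=\mu(B)>0$, is exactly the right device to reconcile the backward-invariance of $B$ with the strict-equality form of ergodicity used in Definition~\ref{def:ergodicity}; all the set-theoretic bookkeeping (the index shift, the redundancy of the $k=0$ term in the intersection, and $T^{-1}$ commuting with countable intersections) checks out.
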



\begin{proof}
Theorem \ref{theorem_main} is essentially the equivalent definition of the ergodicity. Please refer to the proofwiki \footnotemark[3] for detailed proof.
\end{proof}

\begin{figure}[h]
\centering
\includegraphics[width=0.6\linewidth]{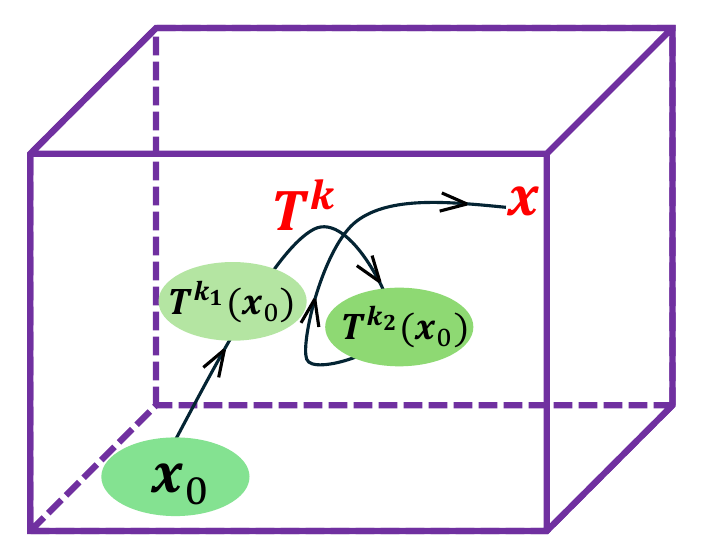} 
\caption{\footnotesize Based on ergodicity, we can construct a deterministic relationship between the one-dimensional trajectory quantity $k$ and the high-dimensional target point $\x \in \mathbb{R}^N$. Thus, we can compress $\x$ into $k$, and decompress $\x$ from $k$ based on $T^k(\x_0)$.} 
\label{ergodic_theory_based}
\vspace{-0.5cm}
\end{figure}



\begin{corollary}
Suppose the $\texttt{supp}(\mu)$ is of full measure, then given a set $A$ with $\mu(A)>0$, for any target point $\x \in \mathbb{R}^N$ and $\epsilon>0$, there exists a point $\x_0 \in A$ and $k \in \mathbb{Z}$ such that 
    \begin{equation}
        \Vert T^k(\x_0)-\x \Vert < \epsilon.
    \end{equation}
Without loss of generality, we define the norm $\Vert \b \Vert=\Vert \b \Vert_2=\sqrt{\sum_i b_i^2}$.    
\label{cor:approximation}    
\end{corollary}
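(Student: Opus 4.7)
The plan is to reduce the corollary to a direct application of Theorem~\ref{theorem_main} by choosing an appropriate target set. I first fix $\x \in \mathbb{R}^N$ and $\epsilon > 0$, and let $B := B(\x,\epsilon) \cap X$ denote the intersection of the open Euclidean $\epsilon$-ball around $\x$ with the underlying space $X$. Provided $\x$ lies in the topological support of $\mu$ (which is an implicit hypothesis for the corollary to make sense; in the concrete case of the irrational winding on the torus this holds automatically because the support is the whole ambient space modulo identification), openness of the ball combined with the definition of support gives $\mu(B) > 0$.

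Next, I invoke Theorem~\ref{theorem_main} with $B$ playing the role of $A$ there: since $T$ is ergodic and $\mu(B) > 0$, the union
\begin{equation}
E \;:=\; \bigcup_{n=1}^{\infty} T^{-n}(B)
\end{equation}
satisfies $\mu(E) = 1$. Intuitively, $E$ collects every point whose forward orbit eventually enters the ball $B$, and ergodicity guarantees that $\mu$-almost every point of $X$ has this recurrence property.

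I then combine $E$ with the given set $A$ of positive measure. Because $\mu$ is a probability measure,
\begin{equation}
\mu(A \cap E) \;=\; \mu(A) + \mu(E) - \mu(A \cup E) \;\geq\; \mu(A) + 1 - 1 \;=\; \mu(A) \;>\; 0,
\end{equation}
so $A \cap E$ is nonempty. Picking any $\x_0 \in A \cap E$, there exists by construction some $k \in \mathbb{N}$ with $\x_0 \in T^{-k}(B)$, which is equivalent to $T^k(\x_0) \in B$, i.e.\ $\Vert T^k(\x_0) - \x \Vert_2 < \epsilon$. This delivers the pair $(\x_0,k)$ required by the corollary.

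The measure-theoretic steps are short and essentially routine once Theorem~\ref{theorem_main} is in hand, so the real obstacle I anticipate is a conceptual one about the statement rather than the proof: namely, the claim as written permits $\x$ to be an arbitrary point of $\mathbb{R}^N$, but if $\x$ lies outside the support of $\mu$ then $\mu(B) = 0$ and the argument collapses. I would therefore either (i) restrict $\x$ to the support of $\mu$ in the hypothesis, or (ii) explicitly note that in every concrete hyperfunction considered in the paper (the irrational winding on $[0,1]^N$ with Lebesgue measure) the support equals the full working domain, so no genuine restriction is imposed on the compression scheme.
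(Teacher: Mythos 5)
Your proof is correct and follows the same route as the paper's: apply Theorem~\ref{theorem_main} to the $\epsilon$-ball around $\x$ to obtain a full-measure set $E$ of points whose orbits eventually enter that ball, then intersect with $A$ (which has positive measure) to extract a suitable $\x_0$. Your write-up is in fact tighter than the paper's own proof, which abuses $\in$ for $\subseteq$ and imprecisely asserts that a single $k$ carries $B_{\epsilon/2}(\x_0)$ into $B_\epsilon(\x)$ for almost every point, and you correctly flag the hidden hypothesis that $\x$ must lie in $\mathrm{supp}\,\mu$ so that $\mu(B)>0$ — a condition the paper assumes tacitly (it holds for the irrational winding with Lebesgue measure).
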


\begin{proof}
Let two sets $B_{\epsilon/2}(\x_0) \in A$ and $B_{\epsilon}(\x) \in A$, where $B_\epsilon(\y)=\{\z~|~\Vert \z-\y\Vert<\epsilon\}$. Due to Theorem \ref{theorem_main} and the fact that $T$ is measure-preserving, there exists $k$ such that $T^k(B_{\epsilon/2}(\x_0)) \in B_{\epsilon}(\x)$ almost for every point from $B_{\epsilon/2}(\x_0)$, which concludes the proof.
\end{proof}


Let us interpret Theorem \ref{theorem_main} and Corollary \ref{cor:approximation}, with an emphasis on how it is related to the model compression problem. First, Definitions \ref{def:mpt}-\ref{def:ergodicity} and Theorem \ref{theorem_main} are purely set-theoretic, and hold true for arbitrarily high-dimensional space. This means that Corollary \ref{cor:approximation} also holds true for an arbitrarily large $N$. Second, Theorem \ref{theorem_main} shows that when the measure-preserving transformation is ergodic, even though $A$ is a tiny set, as long as its measure is positive, applying $T$ on $A$ recursively will create a one-dimensional trajectory which can fill the entire space $X$. As a result, per Corollary \ref{cor:approximation}, we can construct a deterministic relationship between the one-dimensional composition number $k$ and the high-dimensional target point $\x \in \mathbb{R}^N$, as shown in Figure \ref{ergodic_theory_based}. Third, based on Corollary \ref{cor:approximation}, we can compress a group of parameters $\x \in \mathbb{R}^N$ into a number $k$ by 
\begin{equation}
    \x \approx T^k(\x_0),
\end{equation}
where $\x$ is the target. Element-wise, we have 
\begin{equation}
    x_n \approx [T^k(\x_0)]_n, n = 1, 2, \cdots, N.
\end{equation}
Again, the prerequisite of this kind of compression is $\texttt{memory}(k)<\texttt{memory}(\x)$.

Theorem \ref{theorem_main} and Corollary \ref{cor:approximation} provide a natural and elegant way to encode a group of numbers into one number. However, we should only be cautiously optimistic about its effect in compression, since simply stitching digits of numbers can also ensemble a group of numbers into one number $(p_1p_2p_3,q_1q_2q_3)\to p_1p_2p_3q_1q_2q_3$. What matters is whether we can find a short $k$ to approximate $x_1, \cdots, x_N$ based on Eq. \eqref{eqn:core}, such that the memory footprint of the former is smaller than the latter. 

\textbf{Remark 1}. The idea of compressing parameters via ergodic theory can also be generalized to continuous dynamic systems which, for example, define the transformation $T$ with differential equations. In this case, the composition number $k$ turns into the trajectory length. In addition, $T$ can be either parametric or non-parametric. Since $T$ is universal for all the given $\x$, parameters defining $T$ only have a moderate impact on the compression rate but may provide more flexibility in compressing parameters.

\vspace{-0.1cm}
\subsection{Specific Case}

Earlier, we outlined a generic framework for the Hyper-Compression via ergodic theory. The next step is to examine the specific methodologies employed within this framework. Besides the basic ergodicity, the most important question is \textit{which dynamic system fits this specific compression issue most?} By addressing this question, we can translate the theory to compress the network parameter vector $\w$.

We think that the following famous theorem casts a good candidate, and we will explain later.

\begin{theorem}[\cite{katok1995introduction}]
\label{theorem1}
Suppose $a_1,\cdots,a_N$ are irrationally independent, 
for any given set $\{w_n\}_{n=1}^N \subseteq [0,1]$ and $\epsilon > 0$, there exists a value $\theta^* \in [0,+\infty)$ such that
\begin{equation}
    |w_n-\tau(\theta^*a_n)|<\epsilon,\; n=1,\cdots,N,
\label{eqn:core}    
\end{equation} where $\tau(z)=z-\lfloor z \rfloor$.
\end{theorem}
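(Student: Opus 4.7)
The plan is to recognize the conclusion as a density statement for the irrational winding on the $N$-torus, and then to deduce that density from Corollary \ref{cor:approximation}. I would equip $\mathbb{T}^N = [0,1)^N$ with the Borel $\sigma$-algebra and normalized Lebesgue (Haar) measure $\mu$, and define $T \colon \mathbb{T}^N \to \mathbb{T}^N$ by $T(\x) = \x + \a \pmod{1}$, where $\a = (a_1, \ldots, a_N)$. Then $T$ preserves $\mu$ (translations preserve Haar measure) and its iterates satisfy $T^k(\0) = (\tau(k a_1), \ldots, \tau(k a_N))$, so setting $\theta^* = k$ reduces the theorem to producing $k \in \mathbb{Z}_+$ with $\|T^k(\0) - \w\|_\infty < \epsilon$ on the torus, for $\w = (w_1, \ldots, w_N) \in [0,1]^N$.

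The missing ingredient for invoking Corollary \ref{cor:approximation} is ergodicity of $T$. The standard route is Fourier analysis on $\mathbb{T}^N$: any $T$-invariant $f \in L^2(\mathbb{T}^N)$ admits an expansion $f(\x) = \sum_{\boldsymbol{k} \in \mathbb{Z}^N} c_{\boldsymbol{k}}\, e^{2\pi i \boldsymbol{k} \cdot \x}$, and the invariance $f \circ T = f$ a.e.\ forces $c_{\boldsymbol{k}}\bigl(e^{2\pi i \boldsymbol{k} \cdot \a} - 1\bigr) = 0$ for every $\boldsymbol{k}$. The irrational-independence hypothesis, interpreted as $1, a_1, \ldots, a_N$ being linearly independent over $\mathbb{Q}$, guarantees $\boldsymbol{k} \cdot \a \notin \mathbb{Z}$ whenever $\boldsymbol{k} \neq \0$, so all nonzero Fourier coefficients vanish and $f$ is constant a.e. This classical Kronecker--Weyl computation certifies that $T$ is ergodic in the sense of Definition \ref{def:ergodicity}.

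With ergodicity established, I would apply Corollary \ref{cor:approximation} with target $\w$ and with $A = B_{\epsilon/2}(\0) \cap \mathbb{T}^N$, which has positive $\mu$-measure. The corollary yields $\x_0 \in A$ and $k \in \mathbb{Z}_+$ with $\|T^k(\x_0) - \w\|_2 < \epsilon/2$. Because $T$ is a translation, $T^k(\x_0) \equiv \x_0 + T^k(\0) \pmod{1}$, so the triangle inequality delivers $\|T^k(\0) - \w\|_2 \leq \|T^k(\x_0) - \w\|_2 + \|\x_0\|_2 < \epsilon$ on the torus, whence $|w_n - \tau(k a_n)| < \epsilon$ coordinate-wise, with $\theta^* = k$ the desired witness.

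The principal obstacle is the ergodicity step: the Fourier argument is classical but is the genuine non-trivial core of the theorem, and it is where the irrational-independence hypothesis is exploited. A secondary technicality is reconciling the torus metric with the Euclidean distance appearing in $|w_n - \tau(\theta^* a_n)|$: if some $w_n$ lies near the boundary $\{0,1\}$, torus-closeness could in principle correspond to $\tau(k a_n)$ sitting near the opposite endpoint, but this can be absorbed by shrinking $A$ and $\epsilon$ slightly or by identifying $w_n = 1$ with $w_n = 0$ on the quotient $\mathbb{T}^N$.
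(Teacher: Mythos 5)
Your proposal and the paper take essentially disjoint routes: the paper supplies no proof at all, merely noting that the theorem is a special case of Corollary~\ref{cor:approximation} and a corollary of Lemma~19 from \cite{zhang2022deep}. You fill that gap with the classical Kronecker--Weyl argument, Fourier-analyzing an invariant $L^2$ function to establish ergodicity of the torus rotation, and your observation that $T^k(\x_0) \equiv \x_0 + T^k(\0) \pmod{1}$ (so the triangle inequality lets you exchange the arbitrary $\x_0 \in A$ produced by Corollary~\ref{cor:approximation} for the origin) is a genuine and necessary step the paper does not address, since the corollary never lets you fix the starting point. Your boundary caveat (torus distance versus Euclidean distance near $\{0,1\}$) is real and correctly handled. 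In short, you give a complete, self-contained proof where the paper gives only a pointer; what the paper's citation buys is brevity, what yours buys is verifiability.

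One interpretational mismatch deserves care. You prove ergodicity of the \emph{discrete} time-one map $T(\x) = \x + \a \pmod 1$, for which the needed hypothesis is that $1, a_1, \ldots, a_N$ be linearly independent over $\mathbb{Q}$, and you return an integer $\theta^* = k$. The paper states the result with real $\theta^* \in [0,+\infty)$ and the surrounding text explicitly calls it a continuous dynamical system; for the continuous winding $\theta \mapsto \theta\a \pmod 1$ the standard (and strictly weaker) hypothesis is that $a_1, \ldots, a_N$ alone are linearly independent over $\mathbb{Q}$. These conditions genuinely differ: $a_1 = 1/2$, $a_2 = \sqrt{2}$ is fine for the flow but the discrete rotation fails to be ergodic. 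Since the paper never defines ``irrationally independent,'' your reading is defensible, but you should either state the stronger hypothesis explicitly or redo the Fourier step for the flow, where invariance under all $\theta$ forces $c_{\boldsymbol{k}} = 0$ whenever $\boldsymbol{k} \cdot \a \neq 0$, so that rational independence of $a_1, \ldots, a_N$ alone closes the argument.
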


\begin{figure}[htb]
\begin{center}
\includegraphics[width=\linewidth]{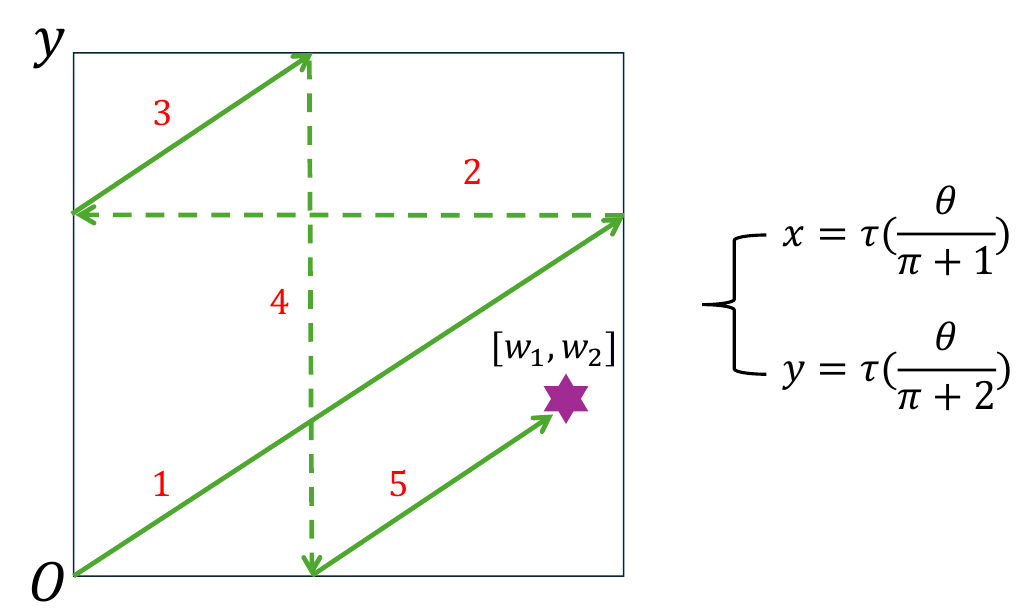}
\caption{\footnotesize A two-dimensional example to explain Theorem \ref{theorem1} that one-dimensional continuous dynamic system can fill the high-dimensional space. The irrational independence ensures that the trajectory never overlap. Thus, given a two-dimensional point $[\w_1,\w_2]$, we can find the corresponding $\theta$ to approximate $[\w_1,\w_2]$ by $[\tau(\theta/(\pi+1)), \tau(\theta/(\pi+2))]$. For example, $[0.07405, 0.00623] \approx [\tau(108/(\pi+1)), \tau(108/(\pi+2))]$. }
\label{explanation}
\end{center}
\end{figure}


\vspace{-0.3cm}
It can be seen that Theorem \ref{theorem1} is a special case of Eq. \eqref{cor:approximation}: Here, the initial point $\x_0$ is the original point. In addition, because Theorem \ref{theorem1} is a continuous dynamic system, the trajectory parameter $\theta$ corresponds to the composition number $k$ of Eq. \eqref{cor:approximation}. Theorem \ref{theorem1} is a corollary of Lemma 19 from \cite{zhang2022deep}. As shown in Figure \ref{explanation}, the irrational independence ensures that the trajectory never overlaps so that it can eventually fill the high-dimensional space, when $\theta$ goes to the infinity. In this theorem, the irrational independence condition is readily fulfilled; for instance, one can simply select $a_k = 1/(\pi + k)$. Later, we empirically and theoretically illustrate that adjusting $a_k$ is instrumental in reducing the approximation error and minimizing the performance drop.

The main reason why we use Theorem \ref{theorem1} is its simplicity, which makes it convenient to solve the trajectory parameter given the target and fast to restore the parameters in decompression. In doing so, we can both compress and restore a model fast, which is favorable for compressing large-scale models and time-sensitive scenarios, respectively.

The entire algorithmic development is anchored in Eq. \eqref{eqn:core}. Mathematically,
given a weight vector $\w$, we define the Hyper-Compression via Theorem \ref{theorem1} as 
\begin{equation}
    \mathcal{H}(\w): = \arg \min _{\theta \leq \Theta, ~ \theta, \Theta \in \mathbb{Z}} \Vert\w-\tau(\Delta\cdot\theta\cdot \a)\Vert
    \label{eqn:compress}
\end{equation}
and
\begin{equation}
    \mathcal{H}^{-1}(\theta): = \tau(\Delta\cdot\theta\cdot \a).
    \label{eqn:restore}
\end{equation}
Here, i) we slightly abuse the symbol $\theta$: $\theta$ is an integer here. We discretize $\theta$ with a fixed step size that is a natural number to make $\theta$ an integer; ii) We also slightly abuse the symbol $(\cdot)^{-1}$. Technically, $\mathcal{H}^{-1}$ is not the reverse function of $\mathcal{H}$. It just denotes the inverse process of $\mathcal{H}$. We use $\mathcal{H}^{-1}$ for the simplicity of notation.

\textbf{Remark 2}. We underscore that though we select Theorem \ref{theorem1} as a specific ergodic transformation in the framework of Hyper-Compression, what is the best ergodic transformation is indeed an open question. We intend to use a specific ergodic transformation to evaluate the feasibility of the overall idea. We think that selecting ergodic transformation should fully take into account the distribution of a model's parameters to maximize the compression rate. For example, one can adjust $T$ to force the trajectory to visit more frequently regions where most parameters are populating. 

In addition, since our method is purely centered on approximating a number vector, it is not the opponent of other classes of model compression algorithms like pruning, decomposition, and distillation. It can organically synergize with them such as stacking a Hyper-Compression on top of them to further escalate the compression efficacy. 

Lastly, Theorem \ref{theorem1} needs that $a_1,a_2,\cdots,a_N$ are irrational numbers. However, in computers that are restricted by the precision limit, all numbers are rational. Theoretically, this will cause the trajectory defined in Theorem \ref{theorem1} to fail covering the entire space. But we actually use Theorem \ref{theorem1} to do approximation, machine precision has been much higher than we need to provide a satisfactory approximation for network parameters.

\section{Error Analysis and Algorithmic Design}
\label{algorithmic}

In this section, we derive the error bound for using Theorem \ref{theorem_main} to compress a single-layer network, which strengthens our understanding for the characters of this kind of compression. Notably, our result can be extended to the multi-layer networks directly. Therefore, we do not put the error bound for these networks for conciseness. Our theory is largely based on \cite{zhang2023post}. Hence, we inherit notations from \cite{zhang2023post} for simplicity. Next, we attempt to materialize the idea of using Theorem \ref{theorem_main} to do model compression. Drawing insights from our theoretical analysis,
we propose four engineering twists to prototype a pragmatic model compression algorithm. 

\subsection{Error Analysis}

In the single-layer network, we define the input data $X \in \mathbb{R}^{m \times N_0}$, the weight matrix $\W \in \mathbb{R}^{N_{0} \times N_{1}}$, and the activation function $\phi: \mathbb{R} \to \mathbb{R}$, which means there are $m$ samples, each with $N_0$ dimensions, and $N_1$ neurons in this single-layer network. Following the divide-and-conquer strategy, estimating the error bound of a single neuron can naturally lead to a good estimation for the entire single network. Without loss of generality, let $\w=[w_1,w_2,\cdots,w_{N_0}]$ be the weight vector of a neuron. With Eqs. \eqref{eqn:compress} and \eqref{eqn:restore}, the compression and decompression can be expressed as
\begin{equation}
    \left\{\begin{array}{l}
\theta=\mathcal{H}\left(\w\right) \\
\q = \mathcal{H}^{-1}(\theta), \\
\end{array}\right.
\label{compre_decompre}
\end{equation}
$\q$ is the decompressed weight vector. 

Since the Hyper-Compression is not lossless, there must be a discrepancy between $\w$ and $\q$, which results in an error in the neuron's final output. To characterize this error, we first characterize the pre-activation error of a neuron. Specifically, we denote pre-activation error recursively in the form of error accumulation.

\begin{equation}
    \left\{\begin{array}{l}
\u_0 = \mathbf{0} \\
\u_t=\u_{t-1}+w_t \X_t-q_t \X_t \\
\u_{N_0}=\X \w-\X \q, 
\end{array}\right.
\end{equation}
where $\u_t$ is the accumulated error generated from $1^{st}$ to the $t$-th weight in the neuron. Thus, $\u_{N_0}=\X \w-\X \q$ is the total error. Our goal is to describe how $\u_{N_0}$ is bounded. The error is measured by the $L_1$-norm, \textit{i.e.}, $\left\|\u\right\|_{1}=\left\|\u\right\|=\sum_{i} |u_i|$.

\begin{theorem}
Given the input data $X \in [0,1]^{m \times N_{0}}$, suppose that each element of the column $X_{t}$ of $X \in \mathbb{R}^{N_{0}}$ is i.i.d. drawn from a uniform distribution over $[0,1]$, and $\w \in [0,1]^{N_{0}}$, we compress and decompress $\w$ based on Eq. \eqref{compre_decompre} with the step size $\Delta>0$ and the maximum integer $\Theta$. Then, there exists $c\in(1/2,1]$, for any $\epsilon<1$, we have $\q$
\begin{equation}
\mathrm{P}\left(\|\X \w-\X \q\| \leq 2cmN_0 \epsilon \right) 
\geq 1-e^{-cmN_0 \epsilon},
\label{eqn:pre_error_analysis}
\end{equation}
where $\q$ is the recovered weight vector.
Furthermore, if the activation function $\varphi: \mathbb{R} \rightarrow \mathbb{R} $ is  $\xi$-Lipschitz continuous, that is,  $\mid \varphi(x)- \varphi(y)|\leq \xi| x-y \mid $ for all  $x, y \in \mathbb{R}$, then we have
\begin{equation}
\mathrm{P}\left(\|\varphi(\X \w)-\varphi(\X \q)\| 
   \leq 2cmN_0 \epsilon\xi \right) 
   \geq  1-e^{-cmN_0 \epsilon}
\label{eqn:error_analysis}
\end{equation}
\label{theorem_4}
\end{theorem}
\vspace{-0.8cm}

\begin{proof} 
$\u_t$ is composed of $w_t-q_t$ and $\X$. We first estimate them, respectively, and then we provide the estimation for $\u_{N_0}$ by integrating them.

i) Suppose that $\w \in [0,1]^{N_{0}}$, we compress $\w$ based on Eq. \eqref{eqn:compress} with the step size $\Delta>0$ and the largest integer $\Theta$. Then, there must exist a sufficiently large $\Theta$ such that for $t=1,2, \ldots, N_{0}$,
\begin{equation}
|w_t-q_t| \leq \epsilon.
\end{equation}
This is because when $\Theta$ is large, there will be one element $\{\tau(\Delta\cdot\theta\cdot \a)\}_{\theta=1}^{\Theta}$ sufficiently close to $\w$ due to the density of trajectory.

ii) Let $ U$  denote the uniform distribution on $[0,1]$ with Suppose that the random vector  $\X \in \mathbb{R}^{m}$ with each element randomly drawn from $U$. Then as $m\to \infty$, we have  
\begin{equation}
    \mathbb{P}(\| X\|<cm) \to 1,
\end{equation}
where $c \in (1/2,1]$. The $l_{1}$-norm of the vector  $\X$ is 
$\|\X\|_{1}=\sum_{i=1}^{n} X_{i}$,
where each $X_{i}$  is i.i.d. uniform on $[0,1]$ .
Because each element $X_i$ is independent from each other, the expected value of $\|\X\|$ is 
\begin{equation}
\mathbb{E}\left[\|\X\|\right]=m \cdot \mathbb{E}\left[X_{i}\right]=m/2.
\end{equation}
Using the Law of Large Numbers, as $m$ increases, the sum 
$\sum_{i=1}^{m} X_{i}$ converges almost surely to its expected value $\frac{m}{2}$. Thus, almost surely, $\| X\|<cm$.

Let $\alpha>0$, by Markov's inequality, one can get 
\begin{equation}
\begin{aligned}
\mathrm{P}\left(\left\|\u_{N_0}\right\| \geq \alpha\right) 
&=\mathrm{P}\left(e^{\eta\left\|\u_{N_0}\right\|} \geq e^{\alpha}\right) \leq e^{-\alpha} \mathbb{E} e^{\left\|\u_{N_0}\right\|} \\
& = e^{-\alpha} \mathbb{E} e^{\left\|\sum_{t=1}^{N_0} (\u_t-\u_{t-1})\right\|} \\
& \leq e^{-\alpha} \sum_{t=1}^{N_0} \mathbb{E} e^{\left\| \u_t-\u_{t-1}\right\|}\\
& \leq e^{-\alpha} \cdot e^{cmN_0 \epsilon}
\end{aligned}
\end{equation}

Let $\alpha=2cN_0m \epsilon$, we have
\begin{equation}
   \mathrm{P}\left(\left\|\u_{N_0}\right\| \geq 2cmN_0 \epsilon \right) \leq  e^{-cmN_0 \epsilon},
\end{equation}
which proves Eq. \eqref{eqn:pre_error_analysis}.  
Then, considering the Lipschitz continuity of the activation function $\varphi$, we have $\|\varphi(\X \w)-\varphi(\X \q)\| \leq \xi \|\X \w-\X \q \|$. Integrating it with Eq. \eqref{eqn:pre_error_analysis} proves Eq. \eqref{eqn:error_analysis}.  

\end{proof}

Eq. \eqref{eqn:error_analysis} characterize that the sample size $m$, the dimensionality $N_0$ and $\Theta$ dominates the reconstruction error $\varphi(\X \w)-\varphi(\X \q)$. It is straightforward to understand that when the sample size $m$ and the dimensionality $N_0$ go higher, more error will be accumulated. In addition, increasing $\Theta$ can add more points into the set $\{\tau(\Delta\cdot\theta\cdot\a): \theta \leq \Theta\}$, which will naturally increase the likelihood of finding a point closer to the given $\w$. 


\textbf{Remark 3.} When no post-hoc retraining is performed, Hyper-Compression can approximately achieve the same level of error but a higher compression rate compared to quantization. Let us take the \texttt{Int2} quantization as an example to illustrate this point. As Figure \ref{ergodic_theory_when} shows, we sample five points (A, B, C, D, E) from the trajectory of the dynamic equation. Because of the irrational direction, the projection of all these five points along both $x$ and $y$ axes will not overlap. Projections of these five points will divide both $x$ and $y$ axes into six segments, \textit{e.g.}, $\Xi_{1x},\Xi_{2x},\cdots,\Xi_{6x}$ and $\Xi_{1y},\Xi_{2y},\cdots,\Xi_{6y}$. We reasonably assume that each segment is at the same level of $1/(2^{\texttt{Int2}}+1)$. If we adjust the irrational direction to make points relatively evenly distributed in the 2D space, the error in approximating most 2D points at positions $x$ is usually no more than one segment, and the error will never exceed a summation of two consecutive segments. The Hyper-Compression achieves the same level of error as \texttt{Int2}. But the Hyper-Compression has a double compression rate, since it turns two numbers into an integer from $\{1,2,3,4,5\}$.

\footnotetext[1]{For more details, please visit the Hugging Face website of LiteLlaMA: \url{https://huggingface.co/ahxt/LiteLlama-460M-1T}.}

\footnotetext[2]{For more details, please visit the Hugging Face website of INCITE-Base: \url{https://huggingface.co/togethercomputer/RedPajama-INCITE-Base-3B-v1}.}

\footnotetext[3]{For more details, please visit the following website: \url{https://proofwiki.org/wiki/Equivalence_of_Defintions_of_Ergodic_Measure-Preserving_Transformation}.}

\footnotetext[4]{For more details, please visit the Hugging Face website of LLaMA-3.2-1B-Instruct: \url{https://huggingface.co/meta-llama/Llama-3.2-1B-Instruct}.}

\footnotetext[5]{For more details, please visit the Hugging Face website of Qwen1.5-7B-Chat: \url{https://huggingface.co/Qwen/Qwen1.5-7B-Chat}.}

\footnotetext[6]{For more details, please visit the Hugging Face website of DeepSeek-R1-Distill-Qwen-14B and 32B: \url{https://huggingface.co/deepseek-ai}.}

\begin{figure}[htb]
\begin{center}
\includegraphics[width=0.9\linewidth]{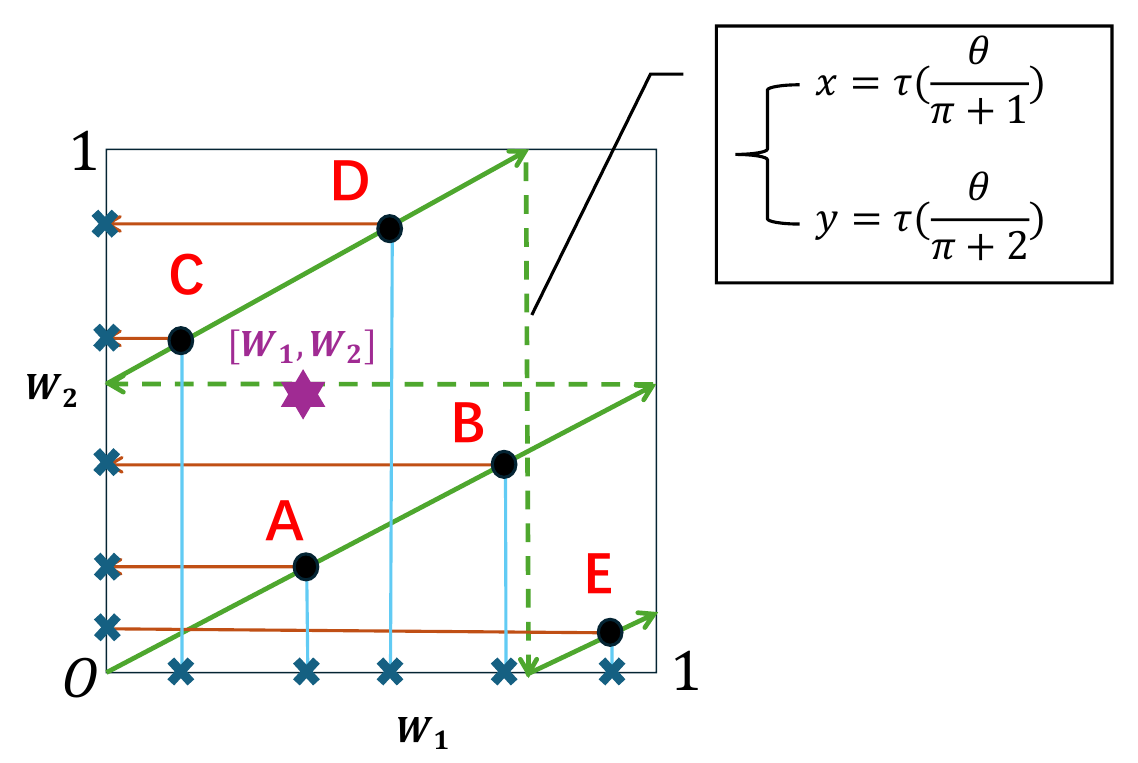}
\caption{\footnotesize When no post-hoc retraining is performed, Hyper-Compression can approximately achieve the same level of error but a higher compression rate compared to quantization. Five points (A, B, C, D, E) are equidistantly sampled from the one-dimensional trajectory. The projections of five points (A, B, C, D, E) along x and y-axes will not overlap due to the irrationality of $\pi$, which ensures the 2D space is fully partitioned to have a small approximation error.}
\label{ergodic_theory_when}
\end{center}
\vspace{-0.5cm}
\end{figure}

\subsection{Algorithmic Design}
For points in $[0,1]^N$, the core objective of the algorithm is, given the preset error $\epsilon$, to engineering as much as we can to rapidly achieve the smallest $\theta^*$ and the highest possible dimension $N$. First, because a large model usually has billions of parameters, searching $\theta^*$ for a group of $N$ parameters has to be fast to ensure that the hypercompression can scale. Second, an imprecise $\theta^*$ could lead to significant performance degradation in the restored networks, rendering the retraining. Therefore, the algorithm must ensure $\theta^*$ to be precise for the highest possible dimension $N$. Third, when time and precision allow, the algorithm should return the smallest $\theta^*$ for the highest compression ratio. We design a general-purpose model compression method that enjoys the following benefits (\textbf{PNAS}): \textbf{P}referable compression ratio, \textbf{N}o post-hoc retraining, \textbf{A}ffordable inference time, and \textbf{S}hort compression time, as summarized in Table \ref{tab:merit_comparison}. As Figure \ref{compression} shows, the compression is as follows:

\begin{figure}[h]  
\centering
\includegraphics[width=\linewidth]{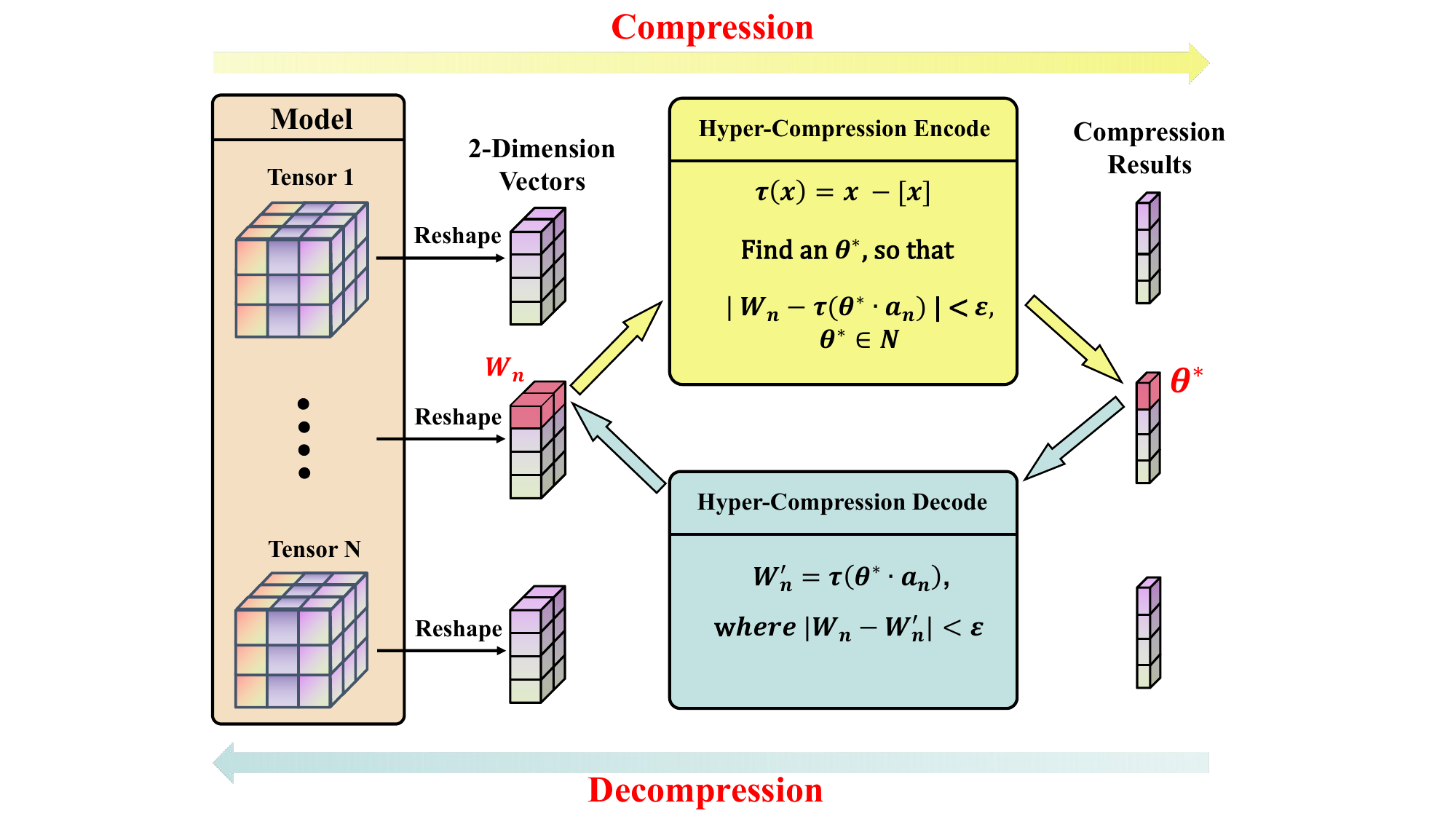} 
\caption{The overall flowchart of using Theorem \ref{theorem1} to compress parameters and restore parameters, which includes dividing parameters into groups and solving $\theta$ for each group.} 
\label{compression}
\end{figure}

First, we flatten all parameters into a one-dimensional vector $\w=[w_1,w_2,\cdots,w_N]$, where $N$ is the total number of weights, and then split it into groups $\left[\w^{(1)}, \w^{(2)}, \cdots, \w^{(G)}\right]$, where $\w^{(g)}=[w_{K(g-1)+1}, w_{K(g-1)+2},\cdots, w_{K(g-1)+K}]$ is of $\mathbb{R}^{1\times K}$, and $G$ is the number of groups. For any $\w^{(g)}$, there exists an integer $\theta^*_g$ such that  $|\w^{(g)}-\tau(\theta^*_g \cdot \a )|<\epsilon, g=1,\cdots,G$ and $\a = [a_1, a_2, \cdots, a_K]$. Finally, the vector $\left[\theta^*_1, \theta^*_2, \cdots ,\theta^*_G\right]$ represents a compression to $\w$. Take $K=2$ and $\w=[w_1,w_2,w_3,w_4]=[0.1,0.2,0.4,0.5]$ as an example, we split $\w$ as $[\w^{(1)}, \w^{(2)}]$, where $\w^{(1)}=[w_1,w_2]=[0.1,0.2]$ and $\w^{(2)}=[w_3,w_4]=[0.4,0.5]$, and derive $\theta^*_1$ and $\theta^*_2$ for $\w^{(1)}$ and $\w^{(2)}$, respectively. As Figure \ref{explanation} shows, we can find a $\theta$ such that $(\tau(\theta/(\pi+1)), \tau(\theta/(\pi+2)))$ can approximate a given point in the 2D space.

Given a parameter vector $\w$, $\Theta=\left[\theta^*_1, \theta^*_2, \cdots ,\theta^*_G\right]$ is a compression. However, we do not want $\max(\theta^*_i)$ to be a large number, as this would require more storage space. Therefore, we define an integer $U$ as the upper bound of $\theta^*_g$ to ensure the preferable compression efficacy. Next, $\left[\theta^*_1, \theta^*_2, \cdots, \theta^*_G\right]$ can be stored using the data type uint$m$, where $m = \lfloor\log_{2}(U)\rfloor + 1$, which is the minimum possible number of bits without causing data overflow. In other words, given a target two-dimensional point $\w^{(g)}$ and $U$, what we are doing is finding an integer $\theta^*_i \in \left[0, U\right]$ such that the error between $\tau(\theta^*_g \cdot \a )$ and $\w^{(g)}$ is minimized, where $\a=[a_1,a_2]$ is the irrational direction. As $U$ increases, potentially, more integers can be explored to make the error $|\w^{(g)}-\tau(\theta^*_g \cdot \a )|$ smaller. Each layer can select a different $U$, allowing important layers to choose a larger $U$ to ensure the approximation error remains sufficiently small.

\begin{table}[htbp]
  \centering
  \caption{The advantage comparison between Hyper-Compression and other model compression methods when compression ratio is high.}
\scalebox{1}{
  \begin{tabular}{lcccc}
  \Xhline{3\arrayrulewidth}
       & Data & Training & Compress. Time & Infer. Time \\
     \hline
       Pruning   &  \xmark &  \xmark &  \cmark & \cmark\\
  Quantization   &  \cmark &  \cmark &  \cmark & \cmark \\
    Decomposition    &  \xmark &  \xmark &  \xmark & \xmark\\
    Distillation  & \cmark &  \cmark & \xmark & \cmark\\
    Ours  & \xmark & \xmark & \cmark & \cmark\\
   \Xhline{3\arrayrulewidth}
    \end{tabular}}%
    \label{tab:merit_comparison}
\end{table}%

Now, we describe in detail our engineering twists in order to harvest the \textbf{PNAS} benefits.

\textbf{1. Translation, scaling, and adjusting irrational directions $\to$ \underline{P}referable compression ratio and \underline{N}o post-hoc fine-tuning}. We consider the distribution of weights in the original network by binning weights into different boxes based on magnitudes of weight values and adjusting the irrational directions to cover as many weights as possible.

\begin{itemize}

    \item In our algorithm, we set $K=2$ and replace the unit square $[0,1]^{2}$ with a more flexible box $[a, b] \times [c, d]$. Specifically, given $\w$, $[a, b] \times [c, d]$ is defined as a square with the side length $l$, centered at $(\bar{x_{i}}, \bar{y_{i}})$, where $(\bar{x_{i}}, \bar{y_{i}})$ is the centroid of two-dimensional points $\w^{(1)}, \w^{(2)}, \cdots, \w^{(G)}$. $l$ is a hyperparameter typically set to 0.01. Consequently, the function $\tau(x):=x-\lfloor x\rfloor $ in the ergodic theorem is generalized as $\tau([x,y]):= f([x,y]) + ([\bar{x_{i}}, \bar{y_{i}}] - [\frac{l}{2}, \frac{l}{2}])$, where $f(x) = x \bmod l$.

    \item If we construct a square with its center at the centroid $(\bar{x_{i}}, \bar{y_{i}})$ and the side length $l$, it is highly probable that many points in $\w^{(1)}, \w^{(2)}, \cdots, \w^{(G)}$ will fall outside this square. Therefore, we need to first scale these points into the square using a scaling factor $s$ during compression, and then scale the substituted points $\tau(\theta^*_g \cdot \a )$ back during decompression. While the error rate remains intact, this inversion will amplify the error $\epsilon$, where $\epsilon = |\w^{(g)}-\tau(\theta^*_g \cdot \a )|$. Assuming that $\w^{(F)}$ is the point farthest from the centroid  $(\bar{x_{i}}, \bar{y_{i}})$ in $\w^{(1)}, \w^{(2)}, \cdots, \w^{(G)}$, we define the farthest distance $l_f$ as $|\w^{(F)}-[\bar{x_{i}}, \bar{y_{i}}]|$, and then the scaling factor $s=\frac{l}{2\cdot l_f}$. The specific formula derivation is as follows:
    \begin{equation}
    \begin{aligned}
    \w^{(in)} & = C\w^{(in)} + C \\
        &  = C\w^{(out)} \cdot s + C \\
        &  = C\w^{(out)} \cdot \frac{l}{2\cdot l_f} + C,
    \end{aligned}
    \end{equation}
    where $C$ is the centroid node, $\w^{(out)}$ is a node ouside the squre, and $\w^{(in)}$ is the scaled node of $\w^{(out)}$. 
    
    Let $\w^{(in)*}=\tau(\theta^*_i \cdot \a )$. Thus, $\epsilon = \w^{(in)*} - \w^{(in)}$. The detailed scale-back process is shown as follows:
    \begin{equation}
    \begin{aligned}
        \w^{(out)*} & = \frac{C\w^{(in)*}}{s} + C \\
              & = \frac{\w^{(in)} + \epsilon}{s} + C.
    \end{aligned}
    \end{equation}

    Therefore, we can derive the error:
    \begin{equation}
    \begin{aligned}
        \texttt{error} & = |\w^{(out)} - \w^{(out)*}| = |C\w^{(out)} - C\w^{(out)*}| \\
             & = |C\w^{(out)} - \frac{C\w^{(in)*}}{s}| = |\frac{\w^{(in)*}- \w^{(in)}}{s}| \\
             & = |\frac{\epsilon}{s}| = |\frac{2 \cdot \epsilon \cdot l_f}{l}|
    \label{error}
    \end{aligned}
    \end{equation}
    
    From \eqref{error}, we can observe that the farthest distance $l_f$ affects the error of estimating $\w^{(out)}$ after scaling back.

    \begin{figure}[htb]
    \begin{center}
    \includegraphics[width=0.8\linewidth]{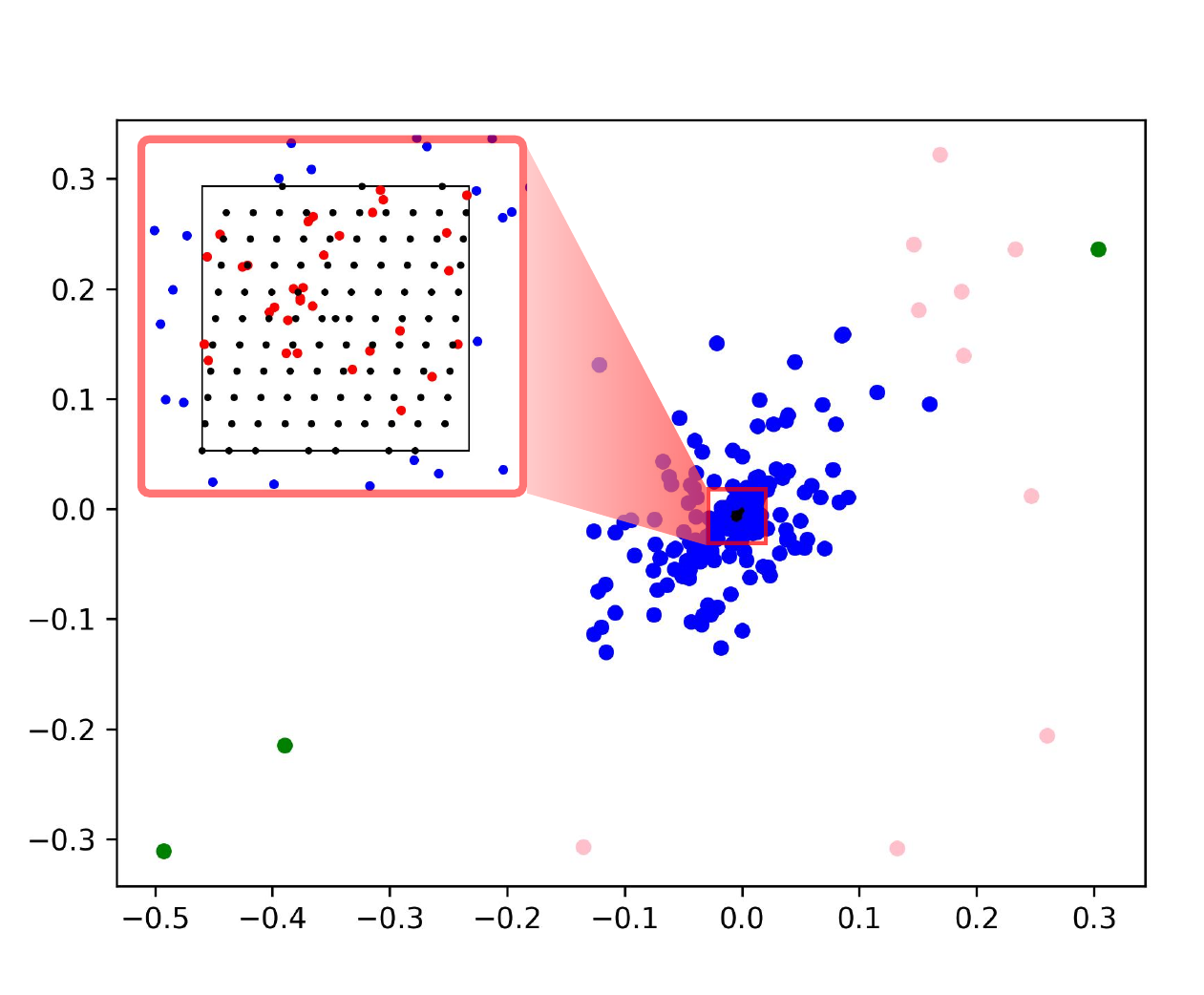}
    \caption{Given $K=2$, the points outside the center square are divided into three categories: blue points belong to the first category, pink points belong to the second category, and green points belong to the third category.}
    \label{layer_k}
    \end{center}
    \vspace{-0.5cm}
    \end{figure}

    
    Therefore, as shown in Figure \ref{layer_k} in our algorithm, we classify the points outside the square into $M$ categories based on their distances from the centroid $C$, where $M$ is a hyperparameter, so that the points in different categories can utilize different farthest distance $l_{fm}$ to define different scale factor $s_m$. This is better than using the global farthest distance $l_f$ to control the compression error. This method is highly effective in practice, as most points are close to the centroid.

    Specifically, assuming that the points outside the square are divided into $M$ categories and $\w^{(m)}$ is a point under the $m$-th category, the scaling factor $s_m$ for this point is defined as

    \begin{equation}
        s_m = \frac{l/2}{l/2 + (l_f / K) \cdot m }.
    \end{equation}

    The final compression $\theta_m^*$ is given by the following formula:

    \begin{equation}
    \theta_m^* = m \cdot U + \lambda_m^*,
    \end{equation}
    where $\lambda_m^*$ is the compression of the scaled $\w^{(m)}$ in the center square.

    \item $\a = [a_1, a_2] = \d \cdot I$ is a decomposition, where $\d$ and $I$ determine the direction and step size for the movement of curves defined, respectively. Since $\theta$ is an integer no more than $U$, at most we have $U+1$ points in two-dimensional space $(\tau(a_1\theta),\tau(a_2\theta)), \theta=0,1,\cdots,U$. In our experiment, we define an optimal vector $\a$ based on $U$, such that the distribution of $U+1$ points is more uniform, thereby minimizing the maximum error as much as possible. Specifically, 
    \begin{equation}
    \begin{aligned}
        \a & = \d \cdot I \\
        & = [\frac{l}{U},l]/\|\frac{l}{U},l\| \cdot I \\
        & = [\frac{l}{U},l]/\|\frac{l}{U},l\| \cdot \frac{l}{\sin{\alpha}\cdot \lfloor \sqrt{U} \rfloor},
    \label{vector_m1}
    \end{aligned}
    \end{equation}
    
    where $\tan{\alpha} = \frac{l}{l/U}=U$. As shown in Figure \ref{nodes_U}, it demonstrates that using \eqref{vector_m1} results in a more uniform distribution of the sample nodes compared to defining $\a$ as follows:
    
    \begin{equation}
    \a = [1/(\pi+1), 1/(\pi+2)].
    \label{vector_m2}
    \end{equation}
    
    \end{itemize}
    
\begin{figure}[hbt]
\begin{center}
\includegraphics[width=0.9\linewidth]{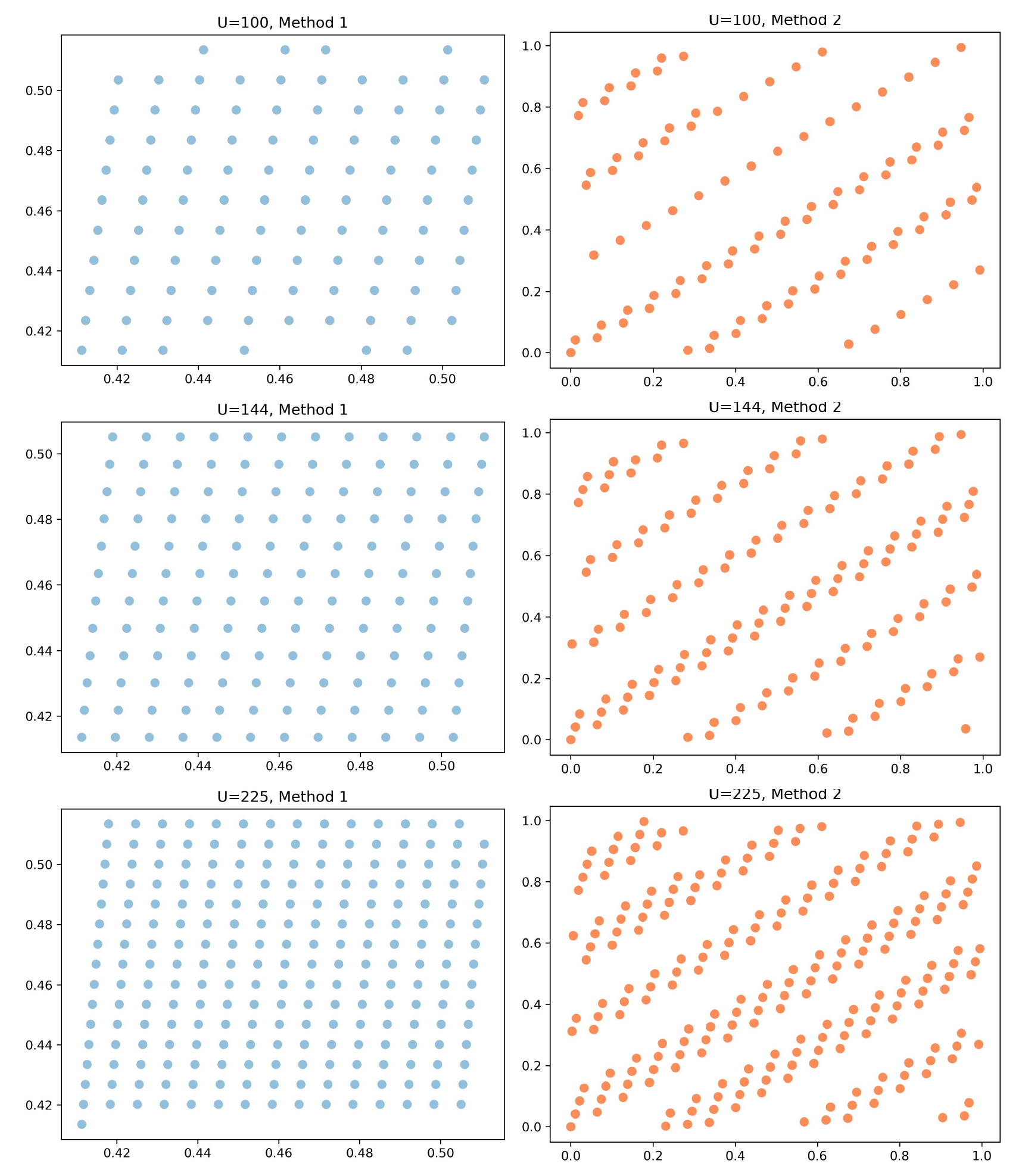}
\caption{Different $\a$ will result in different distribution of points. Upper three subfigures use \eqref{vector_m1}, while lower three figures utilize \eqref{vector_m2}.}
\label{nodes_U}
\end{center}
\end{figure}

\textbf{2. Simultaneous Inference and Decompression $\to$ \underline{A}ffordable inference time}. At first glance, one may think that Hyper-Compression suffers from a slow inference time, as this technique needs to restore parameters before inference, which adds another level of computation. Here, we leverages the intrinsic hierarchical structure of a network to greatly reduce the inference time. Our scheme parallelizes parameter decompression and inference. As shown in Figure \ref{Fig:trend}, while the parameters of later layers (except the first layer) are restored, the inference operation in earlier layers is also carried out simultaneously. As long as we can recover the parameters of the current layer before the inference arrives at this layer, there is no waste of time. Thus, theoretically, the inference time of using our algorithm only increases moderately. The increment is the time used to restore the parameters of the first layer.

\begin{figure}[htb]
\center{\includegraphics[width=\linewidth] {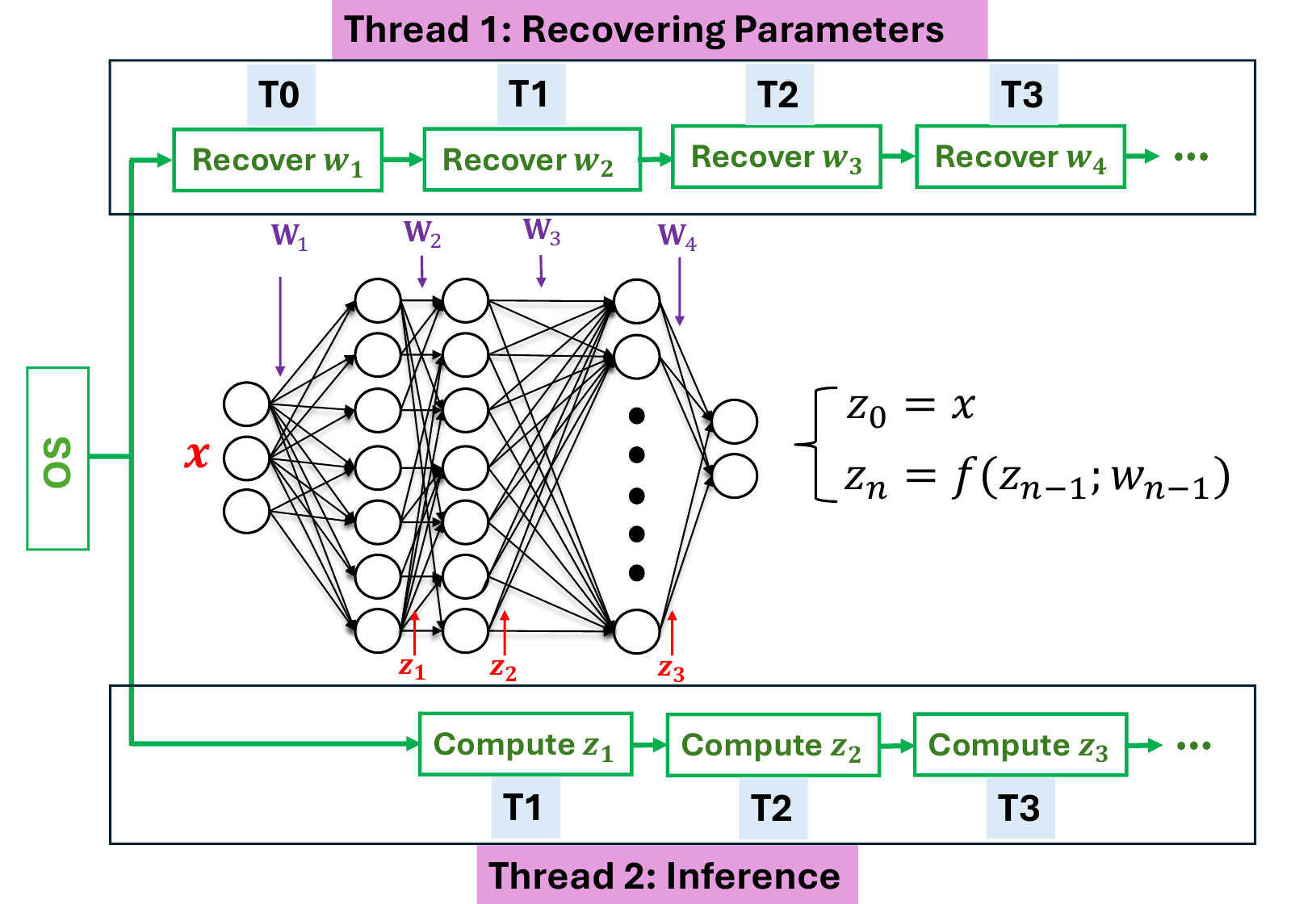}}
\caption{The rapid growth of LLM's size has outpaced the growth of GPU memory, creating challenges in serving these increasingly massive models.}
\label{Fig:trend}
\vspace{-0.5cm}
\end{figure}

Based on the above solution, we need to reduce the decompression time. First, the entire decompression process is implemented using matrix operations, which is significantly faster. Second, the process of reading files from storage and preprocessing them is relatively time-consuming. Therefore, we optimize this process, performing file reading and preprocessing operations only when the function is called for the first time, and store the processing results in the cache. When the function is called again, the preprocessed intermediate results are directly captured from the cache, which can further accelerate the inference.


\textbf{3. KD-Tree + Parallelization $\to$ \underline{S}hort compression time}.
\label{subsec:shortcomtim}
With the advent of large models, compression time becomes an important facet of evaluating a model compression algorithm. Here, we propose a suite of techniques to enable the proposed Hyper-Compression method to fast finish the compression for both CPU and GPU inference. 

\begin{itemize}
    \item When encoding the parameter vector $\w$, we first convert it into many points $\w^{(1)}, \w^{(2)}, \cdots, \w^{(G)}$. Then, given $\w^{(g)}$, we search $u^*$ from $\{\tau(1 \cdot \a ), \cdots,  \tau(U \cdot \a )\}$ to approximate $\w^{(g)}$. Next, instead of repeating searching for every point, we store $\tau(1 \cdot \a ), \cdots,  \tau(U \cdot \a )$ in the format of a KD-Tree \cite{ram2019revisiting} to turn the searching problem into the problem of finding the nearest neighbor. Thus, we can determine $\tau(u^* \cdot \a ) \in \{\tau(1 \cdot \a ), \cdots,  \tau(U \cdot \a )\}$ that is closest to $\w^{(g)}$.


    \item We extensively use matrix operations. Given a series of two-dimensional points $\w^{(1)}, \w^{(2)}, \cdots, \w^{(G)}$, we first calculate the scaling factor list $F$:

        \begin{equation}
        \begin{aligned}
            F &= [s_1, s_2, \cdots, s_G]^\top, \\
            s_g &= \frac{l/2}{l/2 + (l_f/M) \cdot m_g},
        \end{aligned}
        \end{equation}
        where $s_g \in F$ represents the scale factor of $\w^{(g)}$ and $m_g$ means the category of node $\w^{(g)}$. The points located within the center square are classified as category $m_g$.



        Let $\boldsymbol{m} = \left[m_1, m_2, \cdots, m_G\right]^\top$, $\w = \left[\w^{(1)}, \w^{(2)}, \cdots, \w^{(G)}\right]^\top$. Then, we can calcute the final compression $\theta^*$ as follows:


        {\color{black}
        \begin{equation}
        \begin{aligned}
        &C\w = \w^\top - [C, \cdots, C ]^\top, \\ 
        &O = C\w \cdot F + [C, \cdots, C]^\top \\ 
        &\u^* = \mathtt{KD} (O^\top, [\tau(1 \cdot a ), \cdots,  \tau(U \cdot \a )]) \\
        & \theta^{*\top} = \u^{*\top} + U \cdot \boldsymbol{m}, 
        \label{lambada_eq}
        \end{aligned}
        \end{equation}
        where $\mathtt{KD}$ denotes executing the KD-Tree algorithm, $U$ is the length of the codebook $[\tau(1 \cdot \a ), \cdots,  \tau(U \cdot \a )]$, and $\tau(\u^* \cdot \a )$ is the closest points to the scaled points $O$.

    \item When both $U$ and $G$ are large (e.g. $U=1600$ and $G=10^9$), the standard KD-Tree search shown in Eq.\eqref{lambada_eq} remains computationally expensive due to the massive search space. To further accelerate the compression process, we optimize the standard KD-Tree, instead of directly using it as the following formula:
    \begin{equation}
    \begin{aligned}
    &u^* = \mathtt{KD} (\w^{in}, [\tau(1 \cdot \a ), \cdots,  \tau(U \cdot \a )]) \\
    \end{aligned},
    \end{equation}
    where $\w^{in}=[w^{in}_1, w^{in}_2]$ is a single two-dimensional weight vector we need to compress. 

    We observe that a full-codebook search within the KD-Tree is redundant. Since the nodes in the codebook $[\tau(1 \cdot \a ), \cdots,  \tau(U \cdot \a )]$ are organized in a structured manner with continuous indices. Specifically, $U$ nodes are aligned sequentially from the bottom-left $[a, c]$ to the top-right $[b, d]$ within a bounding box $[a, b] \times [c, d]$. Therefore, we can rapidly pinpoint the indices of the nodes in the two columns adjacent to a query point $\w^{(in)}$ to serve as a search subset:

    \begin{equation}
    \begin{cases}
    & u_l = \lfloor \frac{w^{in}_1 - a}{l/(\sqrt{U}-1)}\rfloor \cdot \sqrt{U} + \lfloor \frac{w^{in}_2 - c}{l/(\sqrt{U}-1)}\rfloor + 1\\
    & u_r = (\lfloor \frac{w^{in}_1 - a}{l/(\sqrt{U}-1)}\rfloor+1) \cdot \sqrt{U} + \lfloor \frac{w^{in}_2 - c}{l/(\sqrt{U}-1)}\rfloor + 2.\\
    \label{pkd}
    \end{cases}
    \end{equation}

    Then, we only need to perform the nearest neighbor search over a small subset of the full codebook:
    \begin{equation}
        u^* = \mathtt{KD} (\w^{in}, [\tau(u_l \cdot \a ), \tau((u_l+1) \cdot \a ), \cdots,  \tau(u_r \cdot \a )]),
    \end{equation}
    which is much faster in terms of every search. 
    
    We compare the nearest neighbor search speed of the standard KD-Tree and the modified one. As illustrated in Figure \ref{search_compare}, we randomly generate matrix of $\mathbb{R}^{G\times 2}$, and we evaluate how much time it takes for two kinds of methods to compress this matrix row-by-row. $G$ increases from $10^6$ to $10^9$. The results demonstrate that the modified method consistently achieves a 5$\times$ speedup.}

    \begin{figure}[h]
    \centering
    \includegraphics[width=1\linewidth]{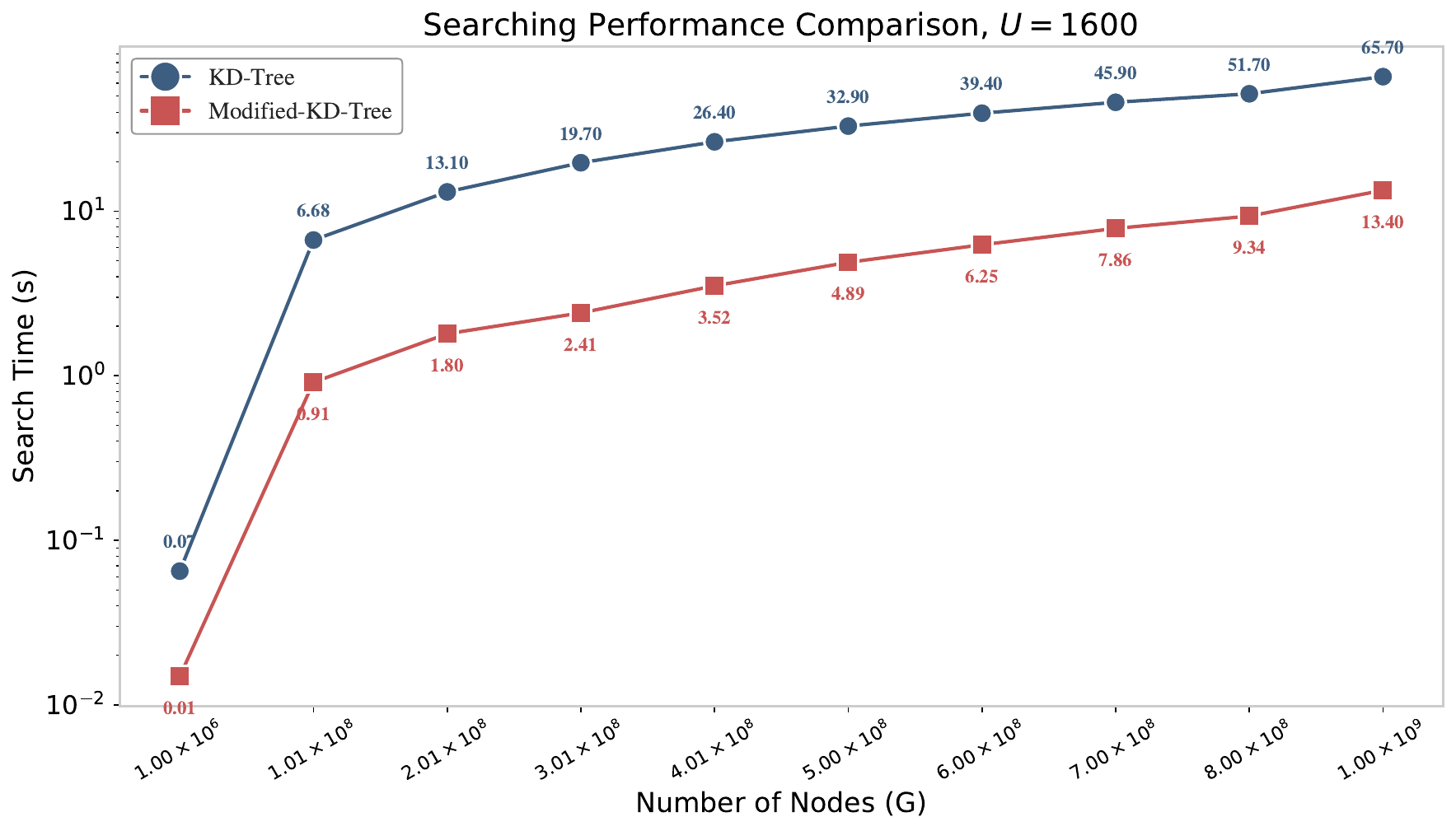} 
    \caption{\footnotesize {\color{black}Comparison of time comsumption in time with $U=1600$ as $G$ scales from $10^6$ to $10^9$. The results show a stable 5$\times$ performance gain after optimization. Even at the extensive scale of $10^9$ nodes, the search concludes in only 13.4 seconds.}} 
    \label{search_compare}
    \end{figure}

    \item We also multiprocess the compression by simultaneously compressing different layers of a model to make full use of computational resources. {\color{black}Regarding potential memory limits, our engineering implementation is explicitly designed to prevent out-of-memory issues when handling ultra-large models. As shown in Eq. \eqref{pkd}, the $\mathtt{KD}$ operation performs a nearest neighbor search independently. Leveraging this independence, we implement a batch processing mechanism for memory-constrained environments in parallel. Specifically, we can define a batch size to partition weights of the target model into multiple manageable subsets. The nearest neighbor search is then executed iteratively (batch-by-batch), and the partial results are finally concatenated together. This ensures that peak memory usage remains bounded, regardless of the total model size.}        
\end{itemize}

\definecolor{orange01}{HTML}{FC8D59} 
\definecolor{blue01}{HTML}{92BFDB}

\vspace{-0.1cm}

\section{Experiment and Analysis}


\begin{table}[h]
\begin{center}
\caption{The information of our used models as the testbed of different model compression methods.}
\scalebox{0.8}{
\begin{tabular}{+l ^c ^c}
\hline
\textbf{Model} & \textbf{Precision} & \textbf{File Size}\\
\hline
\rowstyle{\color{black}} ResNet18\cite{he2016deep} & FP32 & 44.60MB \\
\rowstyle{\color{black}} ResNet50\cite{he2016deep} & FP32 & 97.70MB \\
UNet \cite{ronneberger2015unet} & FP32 & 51.10MB \\
Pruned-UNet & FP32 & 20.20MB \\
MobileNetV3 \cite{howard2019searching} & FP32 & 5.95MB \\
Pruned-MobileNetV3 & FP32 & 2.22MB \\
LLaMA2-7B \cite{touvron2023llama} & FP16 & 12.50GB \\
Sheared-LLaMA-1.3B \cite{xia2024sheared} & FP32 & 5.01GB \\
TinyLLaMA \cite{zhang2024tinyllama} & FP32 & 4.09GB \\
LiteLLaMA \footnotemark[1] & FP16 & 0.86GB \\
\rowstyle{\color{black}} LLaMA-3.2-1B-Instruct \footnotemark[4]  & FP16 & 2.47GB \\
\rowstyle{\color{black}} Qwen1.5-7B-Chat \footnotemark[5] & FP16 & 15.45GB \\
\rowstyle{\color{black}} DeepSeek-R1-Distill-Qwen-14B \footnotemark[6] & FP16 & 29.54GB \\
\rowstyle{\color{black}} DeepSeek-R1-Distill-Qwen-32B \footnotemark[6] & FP16 & 65.54GB \\
\rowstyle{\color{black}} Qwen2-72B-Instruct \cite{team2024qwen2} & FP16 & 145.51GB \\
\hline
\end{tabular}}
\label{table_param}
\end{center}
\end{table}

\begin{table}[t!]
\caption{\color{black}{Detailed hyperparameter settings in Hyper-Compression for all the models.}}
\centering
\scalebox{0.75}{
\begin{tabular}{+l|^c|^c|^c}
\toprule
\textbf{Model} & Max class & \textbf{$U$} & \textbf{$l$}  \\
\midrule
\rowstyle{\color{black}} UNet & 3 & [225] & [0.1]  \\
\rowstyle{\color{black}} MobileNetV3 & 4 & [1600] & [0.8]  \\
\rowstyle{\color{black}} LLaMA2-7B & 3 & [1600] & [0.1]  \\
\rowstyle{\color{black}} Sheared-LLaMA-1.3B & 3 & [1600] & [0.1] \\
\rowstyle{\color{black}} TinyLLaMA & 3 & [1600] & [0.1] \\
\rowstyle{\color{black}} LiteLLaMA & 3 & [1600] & [0.1] \\
\rowstyle{\color{black}} DeepSeek-R1-Distill-QWen-14B & 3 & [1681] & [0.1]  \\
\rowstyle{\color{black}} DeepSeek-R1-Distill-QWen-32B & 3 & [1681] & [0.1]  \\ 
\rowstyle{\color{black}} DeepSeek-R1-Distill-QWen-72B & 3 & [900, 1296, 1521, 1681] & [0.1]  \\ 
\rowstyle{\color{black}} LLaMA-3.2-1B-Instruct & 3 & [1681] & [0.1]  \\
\rowstyle{\color{black}} Qwen1.5-7B-Chat & 3 & [900, 1296, 1521, 1681] & [0.05, 0.1, 0.15]  \\
\bottomrule
\end{tabular}
}
\label{hyper_table}
\vspace{-0.5cm}
\end{table}

This section illustrates the efficacy of our novel compression methodology on three widely utilized representative models: {\color{black}LLaMA~\cite{touvron2023llama} series, Qwen~\cite{ahmed2025qwen} series, UNet~\cite{ronneberger2015unet}, ResNet series~\cite{he2016deep}, and MobileNetV3~\cite{howard2019searching}, which correspond to  large, middle, and small models, respectively.} Table \ref{table_param} shows the information of our used models as the testbed of different model compression methods. Favorably, our compression technique does not require post-hoc retraining, even at high compression ratios. For example, we achieve up to a $7.87\times$ reduction in the model size of UNet within $1\%$ performance drop. This is a significant improvement over traditional methods that often require extensive retraining to restore efficacy. This capability sets a new benchmark in model compression. Because our proposed Hyper-Compression method requires no post-hoc training, we do not compare it with distillation-based methods that demands a great amount of data to train the student model.

\begin{table*}[t]
    \begin{center}
    \caption{{\color{black}Comparison between our methods and other compressed models on 8 downstream tasks. As a reference, INT4 quantization can achieve a maximum compression ratio of 4$\times$ on FP16 models.}}
    \scalebox{0.9}{
    \setlength{\tabcolsep}{3mm}{
    \begin{tabular}{+l ^c ^c ^c ^c ^c} 
    \hline 
    \textbf{Model} & \textbf{File Size (GB)} & \textbf{SciQ (\%)} & \textbf{WinoGrande (\%)} & \textbf{ARC-E (\%)} & \textbf{ARC-C (25) (\%)}\\
    \hline
    LLaMA2-7B \cite{touvron2023llama} & 12.50  & 94.00 & 68.98 & 76.30 & 52.39\\
    \rowstyle{\color{black}} DeepSeek-R1-Distill-QWen-14B \footnotemark[6] & 29.54 & 95.50 & 72.85 & 78.11 & 58.62\\
    \rowstyle{\color{black}} DeepSeek-R1-Distill-QWen-32B \footnotemark[6] & 65.54 & 95.90 & 75.69 & 81.06 & 63.74\\
    \rowstyle{\color{black}} Qwen2-72B-Instruct \cite{team2024qwen2} & 145.51 & 97.10 & 77.82 & 85.27 & 68.60\\
    OPT-1.3B & 2.63 & 84.30 & 59.60 & 57.00 & 29.70\\
    Pythia-1.4B & 2.93 & 86.40 & 57.40 & 60.70 & 31.20\\
    OPT-2.7B & 5.30 & 85.80 & 60.80 & 60.80 & 34.00\\
    Pythia-2.8B & 5.68 & 88.30 & 59.70 & 64.40 & 36.40\\
    INCITE-Base-3B & 5.69 & 90.70 & 63.50 & 67.70 & 40.20\\
    Open-LLaMA--3B-v1 & 6.85 & 91.30 & 61.50 & 67.60 & 39.60\\
    Open-LLaMA--3B-v2 & 6.85 & 91.80 & 63.50 & 66.50 & 39.00\\
    Sheared-LLaMA-1.3B \cite{xia2024sheared} & 5.01 (2.50$\times$) & 87.30  & 58.09 & 60.98 & 34.04\\
    TinyLLaMA \cite{zhang2024tinyllama} &  4.09 (3.06$\times$)  & 89.30 & 59.43 & 61.66 & 37.12\\
    LiteLLaMA \footnotemark[1] & 0.86 (14.53$\times$)  & 75.10 & 52.64 & 47.85 & 24.32\\

    LLaMA2-7B + HC & 4.80 (2.60$\times$)  & 93.70 & 69.77 & 75.59 & 53.24\\
    Sheared-LLaMA-1.3B + HC & 0.98 (12.76$\times$)  & 87.90 & 58.88 & 60.48 & 32.85\\
    TinyLLaMA + HC & 0.78 (16.03 $\times$) & 89.50 & 58.96 & 61.11 & 36.43\\
    LiteLLaMA + HC & 0.39 (32.05$\times$) & 73.00 & 54.22 & 44.78 & 23.89\\
    \rowstyle{\color{black}} DeepSeek-R1-Distill-QWen-14B + HC & 11.96 (2.47$\times$)  & 95.00 & 71.11 & 76.47 & 57.76\\
    \rowstyle{\color{black}} DeepSeek-R1-Distill-QWen-32B + HC & 26.53 (2.47$\times$)  & 96.00 & 75.85 & 80.18 & 63.57\\
    \rowstyle{\color{black}} Qwen2-72B-Instruct + HC & 50.18 (2.90$\times$) & 97.20 & 77.51 & 85.73 & 69.28\\
    
    \hline
    \hline
    
    \textbf{Model} & \textbf{HellaSwag (10) (\%)} & \textbf{BoolQ (32) (\%)} & \textbf{NQ (32) (\%)} & \textbf{MMLU (5) (\%)} & \textbf{Average (\%)}\\
    \hline
    LLaMA2-7B \cite{touvron2023llama} & 78.94  & 81.90 & 28.67 & 45.86 & 65.88\\
    \rowstyle{\color{black}} DeepSeek-R1-Distill-QWen-14B \footnotemark[6] & 81.05  & 88.69 & 22.22 & 74.62 & 71.46\\
    \rowstyle{\color{black}} DeepSeek-R1-Distill-QWen-32B \footnotemark[6] & 83.27 & 90.52 & 26.37 & 80.90 & 74.68\\
    \rowstyle{\color{black}} Qwen2-72B-Instruct \cite{team2024qwen2} & 86.95 & 89.88 & 36.59 & 83.91 & 78.27\\
    OPT-1.3B & 54.50  & 57.50 & 6.90 & 24.70 & 46.78\\
    Pythia-1.4B & 53.00  & 57.40 & 6.20 & 25.70 & 47.25\\
    OPT-2.7B & 61.50  & 63.40 & 10.10 & 25.90 & 50.29\\
    Pythia-2.8B & 60.80  & 66.00 & 9.00 & 26.90 & 51.44\\
    INCITE-Base-3B & 64.80  & 65.90 & 14.90 & 27.00 & 54.34\\
    Open-LLaMA--3B-v1 & 62.60  & 70.00 & 18.60 & 27.00 & 54.78\\
    Open-LLaMA--3B-v2 & 67.60  & 69.60 & 17.10 & 26.90 & 55.20\\
    
    Sheared-LLaMA-1.3B \cite{xia2024sheared} & 61.02  & 65.54 & 9.89 & 25.59 & 50.31\\
    TinyLLaMA \cite{zhang2024tinyllama} & 62.48  & 62.91 & 12.52 & 26.79 & 51.53\\
    LiteLLaMA & 38.41  & 57.09 & 1.77 & 26.11 & 40.41\\
    LLaMA2-7B + HC & 77.17  & 80.92 & 25.65 & 43.04 & 64.89\\
    Sheared-LLaMA-1.3B + HC & 60.56  & 63.76 & 8.98 & 24.68 & 49.76\\
    TinyLLaMA + HC & 61.92  & 58.41 & 11.58 & 27.28 & 50.65\\
    LiteLLaMA + HC & 37.56  & 56.57 & 1.16 & 26.61 & 39.72\\
    \rowstyle{\color{black}} DeepSeek-R1-Distill-QWen-14B + HC & 80.36  & 88.44 & 21.39 & 73.47 & 70.50\\
    \rowstyle{\color{black}} DeepSeek-R1-Distill-QWen-32B + HC & 82.94 & 90.34 & 25.18 & 80.43 & 74.31\\
    \rowstyle{\color{black}} Qwen2-72B-Instruct + HC & 86.85  & 89.85 & 36.29 & 83.73 & 78.31\\
    \hline
    
    \end{tabular}
    }}
    \label{table01}
    \end{center}
    \vspace{-0.5cm}
\end{table*}

\begin{table}[h!]
    \begin{center}
    \caption{The comparison of Perplexity (PPL) on the dataset \textit{wikitext-2-raw-v1}}
    \scalebox{1}{\begin{tabular}{ccc}
    \hline
    \textbf{Model} & \textbf{Rate} & \textbf{PPL} \\
    \hline
    LlaMA2-7B & - & 5.47 \\
    \makecell[c]{LlaMA2-7B +\\ W8A8 SmoothQuant \cite{xiao2023smoothquant}} & 2.00$\times$  & 5.52\\
    \makecell[c]{LlaMA2-7B +\\ Asymmetric Quant (int8)} & 2.00$\times$  & 5.65\\
    \makecell[c]{LlaMA2-7B +\\ GPTQ \cite{frantar2022gptq} (int4)} & 3.46$\times$  & 5.69\\
    \makecell[c]{LlaMA2-7B +\\ Asymmetric Quant (int4)} & 4.00$\times$  & 26160.34\\
    LlaMA2-7B + HC & 2.60$\times$  & 5.82\\
    Sheared-LlaMA-1.3B & 2.50$\times$  & 8.13\\
    Sheared-LlaMA-1.3B + HC & 12.76$\times$  & 8.37\\
    TinyLlaMA & 3.06$\times$  & 7.71\\
    TinyLlaMA + HC & 16.03$\times$  & 7.95\\
    LiteLlaMA & 14.53$\times$ & 31.82 \\
    LiteLlaMA + HC & 32.05$\times$  & 37.85\\
    \hline
    \end{tabular}}
    \label{table_ppl}
    \end{center}
    \vspace{-0.5cm}
\end{table}

\textbf{Hardware Configuration and Detailed Compression Parameters}. {\color{black}All experiments are conducted on a server equipped with NVIDIA RTX L40, NVIDIA RTX A40, NVIDIA RTX 4060  and NVIDIA RTX 6000 Pro GPUs. Given the extensive scope of our experiments, listing all hyperparameters in the main text would be exhaustive. Therefore, we have compiled a detailed Hyperparameter Table \ref{hyper_table}, which covers the specific settings for all reported results.}

\textbf{Preferable compression ratio}. As shown in Table \ref{table01}, we apply the same pruning method used in Sheared-LLaMA-1.3B, TinyLLaMA, and LiteLLaMA, combined with our compression technique. We evaluate our compressed models on eight downstream tasks: 0-shot accuracy on SciQ, WinoGrande, ARC-E, 25-shot ARC-C, 10-shot HellaSwag, 32-shot BoolQ, NQ, and 5-shot MMLU. The average test results of eight downstream tasks are shown in Table \ref{table01}, where “File Size" indicates the size of the file that stores the model. Notably, our method can compress LLaMA2-7B by a factor of $2.60\times$ while maintaining the average score decrease within 1\%, which achieves the best balance, while other models either achieve sup-optimal compression rates or bear a large performance drop. For example, Sheared-LLaMA-1.3B achieves $2.50\times$ with $15.57\%$ performance decreases, while TinyLLaMA achieves $3.60\times$ with $14.33\%$ performance loss on average. {\color{black}Furthermore, we test the compressed model and the original one on more downstream benchmarks. The results are shown in Table \ref{more_tasks}. We expand the evaluation to 9 new tasks: GSM8K\cite{cobbe2021training} for multi-step mathematical reasoning; PIQA\cite{bisk2020piqa} for physical commonsense understanding; and the SuperGLUE suite (CB\cite{de2019commitmentbank}, COPA\cite{roemmele2011choice}, MultiRC\cite{khashabi2018looking}, ReCoRD\cite{zhang2018record}, RTE\cite{sarlin2020superglue}, WiC\cite{pilehvar2019wic}, WSC\cite{levesque2012winograd}) covering linguistic inference, coreference resolution, and reading comprehension. These tasks collectively assess critical dimensions of language model performance, including logical reasoning, world knowledge, and contextual understanding, which can provide a more rigorous validation of model robustness. To the best of our knowledge, few studies in the domain of model compression tested as many as 16 downstream tasks as our work does. For instance, Sheared-LLaMA~\cite{xia2024sheared} evaluated 11 downstream tasks, and AWQ~\cite{lin2024awq} included 11 tasks in its comparative experiments; meanwhile, LLaMA~\cite{touvron2023llama} and Qwen2.5~\cite{ahmed2025qwen} were tested on only 8 and 5 tasks, respectively.} 

\begin{table*}[h!]
\begin{center}
\caption{{\color{black}Performance comparison of Qwen1.5-7B-Chat and its hyper-compressed variant across 16 benchmark tasks}}
\scalebox{0.82}{
\setlength{\tabcolsep}{3mm}{
\begin{tabular}{+r ^c ^c ^c ^c ^c ^c ^c ^c ^c} 
\hline 
\addlinespace
\textbf{Model} & \textbf{Ratio} & \textbf{SciQ } & \textbf{WinoGrande } & \textbf{ARC-E } & \textbf{ARC-C} & \textbf{HellaSwag} & \textbf{BoolQ} & \textbf{MMLU} & \textbf{GSM8K} \\
\addlinespace
\hline
\addlinespace
\rowstyle{\color{black}} Qwen1.5-7B-Chat & - & 92.60 & 65.43 & 68.48 & 52.56 & 78.65 & 85.44 & 60.48 & 54.06\\
\addlinespace
\rowstyle{\color{black}} \quad \textbf{+ Hyper-Compression} & 2.67 & 91.30 & 65.19 & 67.13 & 52.73 & 77.72 & 85.17 & 59.24 & 51.18\\
\addlinespace

\hline
\hline
\addlinespace
\textbf{PIQA} & \textbf{CB} & \textbf{COPA} & \textbf{MultiRC} & \textbf{ReCoRD} & \textbf{RTE} & \textbf{WiC} & \textbf{WSC} & \textbf{Average} & \textbf{Drop} \\
\addlinespace
\hline
\addlinespace
\rowstyle{\color{black}} 75.19 & 60.71 & 84.00 & 26.26 & 80.18 & 83.03 & 67.40 & 55.77 & 65.23 & -\\
\addlinespace
\rowstyle{\color{black}} 74.65 & 60.71 & 89.00 & 23.54 & 79.87 & 80.59 & 66.77 & 59.62 & 64.75 & 0.48\\
\addlinespace
\hline
\end{tabular}
}}
\label{more_tasks}
\end{center}
\vspace{-0.5cm}
\end{table*}

\begin{table}[h!]
    \centering
    \caption{{\color{black}The compression effectiveness of our method on ResNet18, ResNet50, UNet and MobileNetV3. Moreover, we also compare our method with other data-free methods on ResNet18 and ResNet50.}}
    \scalebox{0.9}{
    \begin{tabular}{+l ^c ^c}
    \hline
    \addlinespace[0.8ex]
    \textbf{Model} & \textbf{Ratio} & \textbf{Dice (\%) } \\
   \hline
    UNet \cite{ronneberger2015unet} & - & 99.86 \\
    Pruning & 2.53$\times$ & 96.34 \\
    HC & 7.87$\times$ & 99.71 \\
    Pruning + HC & 17.74$\times$ & 96.45 \\
    \hline
    \textbf{Model} & \textbf{Ratio} & \textbf{Top-1 Acc (\%)}\\
    \hline
    MobileNetV3 \cite{howard2019searching} & - & 74.41 \\
    Pruning & 2.68$\times$ & 69.32\\
    HC & 3.61$\times$ & 73.93 \\
    Pruning + HC & 12.66$\times$ & 68.47 \\
    \hline
    \textbf{Model} & \textbf{Ratio} & \textbf{Top-1 Acc (\%)}\\
    \hline

    \rowstyle{\color{black}} ResNet18 & - & 71.47\\
    \rowstyle{\color{black}} ResNet18-\textbf{ZeroQ-INT4} & 8$\times$ & 19.09\\
    \rowstyle{\color{black}} ResNet18-\textbf{ZeroQ-INT8} & 4$\times$ & 71.43\\
    \rowstyle{\color{black}} ResNet18-\textbf{GDFQ-INT4} & 8$\times$ & 60.60\\
    \rowstyle{\color{black}} ResNet18-\textbf{GDFQ-INT8} & 4$\times$ & 70.68\\
    \rowstyle{\color{black}} ResNet18-\textbf{SQuant-INT4} & 8$\times$ & 66.14\\
    \rowstyle{\color{black}} ResNet18-\textbf{SQuant-INT8} & 4$\times$ & 71.47\\
    \rowstyle{\color{black}} ResNet18-\textbf{HC} & 6.21$\times$ & 70.93\\
    \rowstyle{\color{black}} ResNet50 & - & 77.74\\
    \rowstyle{\color{black}} ResNet50-\textbf{ZeroQ-INT4} & 8$\times$ & 7.75\\
    \rowstyle{\color{black}} ResNet50-\textbf{ZeroQ-INT8} & 4$\times$ & 77.65\\
    \rowstyle{\color{black}} ResNet50-\textbf{GDFQ-INT4} & 8$\times$ & 55.65\\
    \rowstyle{\color{black}} ResNet50-\textbf{GDFQ-INT8} & 4$\times$ & 77.51\\
    \rowstyle{\color{black}} ResNet50-\textbf{SQuant-INT4} & 8$\times$ & 70.80\\
    \rowstyle{\color{black}} ResNet50-\textbf{SQuant-INT8} & 4$\times$ & 77.71\\
    \rowstyle{\color{black}} ResNet50-\textbf{HC} & 5.92$\times$ & 76.40\\

    \hline
    \end{tabular}}
    \label{table_unet}
    \vspace{-0.5cm}
\end{table}

Moreover, perplexity (PPL) is a vital metric for evaluating the overall performance of large language models (LLMs). We conduct tests on the publicly available dataset wikitext\footnotemark[2]. As shown in Table \ref{table_ppl}, the PPL of LLaMA2-7B+HC increases by only 0.35 compared to the original LLaMA2-7B model, whereas the PPL increases caused by other three LLaMA2-7B variants (Sheared-LlaMA-1.3B, TinyLlaMA, LiteLlaMA) are 2.66, 2.24, and 26.35, respectively. We also compare the perplexity with four quantization methods: W8A8 SmoothQuant (int8, $\alpha$=0.85), Asymmetric Quant (int8), GPTQ (int4), and Asymmetric Quant (int4). Among them, both W8A8 SmoothQuant (int8, $\alpha$=0.85) and GPTQ (int4) require calibration sets for calibration during the quantization process, whereas our method does not involve calibration. Without re-training, our model is already better than the int8 quantization but inferior to the int4 quantization with retraining. This demonstrates that our method is competitive in real-world applications. Figure \ref{talks} compares the text outputs generated by eight different LLM models for a given prompt.

{\color{black}
We also incorporate comparative experiments with four prominent quantization methods: AWQ \cite{lin2024awq}, GPTQ \cite{frantar2022gptq}, SmoothQuant \cite{xiao2023smoothquant}, and SpinQuant \cite{liu2024spinquant}. All four methods are Post-Training Quantization (PTQ) algorithms that even require calibration datasets for parameter optimization. We conduct these evaluations on the Llama-3.2-1B-Instruct and Qwen1.5-7B-Chat models. 

The evaluation results are reported in Tables \ref{compare_with_quant_A} and \ref{compare_with_quant_B}. We evaluate all the models on eight downstream tasks:  0-shot accuracy on SciQ, WinoGrande, ARC-E, 25-shot ARC-C, 10-shot HellaSwag, 32-shot BoolQ, NQ, and 5-shot MMLU. Notably, our method compresses LLaMA-3.2-1B-Instruct and Qwen1.5-7B-Chat by factors of $1.62\times$ and $2.67\times$, respectively, while limiting the average score decrease to within 1.24\% and 0.94\%. This achieves the optimal balance between efficiency and accuracy, whereas other models either yield sub-optimal compression rates or suffer from significant performance degradation. For instance, although GPTQ-INT8 restricts the average score drop in LLaMA-3.2-1B-Instruct within only 0.06\%, it achieves a limited compression factor of $1.69\times$. Similarly, while SmoothQuant-INT4 reaches a compression factor of $1.59\times$, it results in a substantial average performance drop of 5.14\%. Considering that the INT4 quantization is the widely-recognized frontier, we reasonably conclude that our method is competitive. It should be noted that, owing to the extensive experimental scope, all comparative methods were evaluated using officially released open-source implementations from \href{https://huggingface.co}{HuggingFace}. This approach eliminates potential discrepancies arising from manual reimplementation, thereby ensuring the fairness of model performance assessment.
}

\begin{table*}[h!]
\centering
\footnotesize
\sisetup{table-format=2.2, table-number-alignment=center}
\begin{threeparttable}
\caption{{\color{black}Comparison between our methods and other compressed models of LLaMA-3.2-1B-Instruct and Qwen1.5-7B-Chat on individual evaluation dataset (Part A). To facilitate comparison, the best results are highlighted in \textbf{bold}, and the second-best results are \underline{underlined}.}}
\label{tab:extended_metrics}
\begin{tabular}{
    @{}
    >{\RaggedRight}p{4.2cm} 
    *{5}{S}
    @{}
}
\toprule
\textbf{Model} & 
{\textbf{File Size(GB)}} & 
{\textbf{SciQ(\%)}} & 
{\textbf{WinoGrande(\%)}} & 
{\textbf{ARC-E(\%)}} & 
{\textbf{ARC-C(\%)}} \\
\midrule
LLaMA-3.2-1B-Instruct & 2.47 & 94.30 & 61.40 & 69.02 & 38.23 \\
\addlinespace[0.8ex]
\quad + SpinQuant-INT4\cite{liu2024spinquant} & {1.59 (1.55$\times$)} & 92.10 & 59.67 & 64.27 & 33.53 \\
\addlinespace[0.3ex]
\quad + AWQ-INT4\cite{lin2024awq} & {1.56 (1.58$\times$)} & \underline{93.80} & \textbf{62.27} & \underline{68.39} & 36.95 \\
\addlinespace[0.3ex]
\quad + SmoothQuant-INT4\cite{xiao2023smoothquant} & \underline{1.55 (1.59$\times$)} & 92.80 & 60.69 & 65.57 & 35.15 \\
\addlinespace[0.3ex]
\quad + SmoothQuant-INT8\cite{xiao2023smoothquant} & {2.02 (1.22$\times$)} & \textbf{94.40} & 61.09 & \textbf{69.15} & \textbf{38.31} \\
\addlinespace[0.3ex]
\quad + Hyper-Compression & \textbf{1.52 (1.62$\times$)} & 93.50 & \underline{62.19} & 67.42 & \underline{37.12} \\
\addlinespace[0.3ex]
\hline
\addlinespace[0.3ex]
Qwen1.5-7B-Chat & 15.45 & 92.60 & 65.43 & 68.48 & 52.56 \\
\addlinespace[0.8ex]
\quad + GPTQ-INT4\cite{frantar2022gptq} & \underline{5.86 (2.64$\times$)} & \underline{91.90} & 64.25 & 66.96 & 51.11 \\
\addlinespace[0.3ex]
\quad + GPTQ-INT8\cite{frantar2022gptq} & {9.12 (1.69$\times$)} & \textbf{92.80} & \underline{65.04} & \textbf{68.35} & \underline{52.47} \\
\addlinespace[0.3ex]
\quad + Hyper-Compression & \textbf{5.79 (2.67$\times$)} & 91.30 & \textbf{65.19} & \underline{67.13} & \textbf{52.73} \\
\bottomrule
\end{tabular}
\label{compare_with_quant_A}
\end{threeparttable}
\end{table*}

\begin{table*}[h!]
\centering
\footnotesize
\sisetup{table-format=2.2, table-number-alignment=center}
\begin{threeparttable}
\caption{{\color{black}Comparison between the proposed methods and other compressed Models of LLaMA-3.2-1B-Instruct and Qwen1.5-7B-Chat on individual evaluation dataset (Part B). To facilitate comparison, the best results are highlighted in \textbf{bold}, and the second-best results are \underline{underlined}.}}
\label{tab:extended_metrics}
\begin{tabular}{
    @{}
    >{\RaggedRight}p{4.2cm} 
    *{5}{S}
    @{}
}
\toprule
\textbf{Model} & 
{\textbf{Hellaswag(\%)}} & 
{\textbf{BoolQ(\%)}} & 
{\textbf{NQ(\%)}} & 
{\textbf{MMLU(\%)}} & 
{\textbf{Average(\%)}} \\
\midrule
LLaMA-3.2-1B-Instruct & 61.10 & 70.61 & 16.09 & 46.12 & 57.11 \\
\addlinespace[0.8ex]
\quad + SpinQuant-INT4\cite{liu2024spinquant} & 52.17 & 62.72 & 7.87 & 37.07 & 51.18 \\
\addlinespace[0.3ex]
\quad + AWQ-INT4\cite{lin2024awq} & 59.11 & \underline{69.42} & 12.41 & 43.36 & 55.71 \\
\addlinespace[0.3ex]
\quad + SmoothQuant-INT4 \cite{xiao2023smoothquant} & 52.84 & 63.09 & 8.12 & 37.51 & 51.97 \\
\addlinespace[0.3ex]
\quad + SmoothQuant-INT8 \cite{xiao2023smoothquant} & \textbf{61.08} & \textbf{70.52} & \textbf{15.73} & \textbf{45.90} & \textbf{57.02} \\
\addlinespace[0.3ex]
\quad + Hyper-Compression & \underline{59.59} & 68.78 & \underline{14.76} & \underline{43.63} & \underline{55.87} \\
\addlinespace[0.3ex] 
\hline
\addlinespace[0.3ex]
Qwen1.5-7B-Chat & 78.65 & 85.44 & 18.67 & 60.48 & 65.29 \\
\addlinespace[0.8ex]
\quad + GPTQ-INT4\cite{frantar2022gptq} & 77.69 & 84.71 & 16.15 & \underline{60.16} & 64.12 \\
\addlinespace[0.3ex]
\quad + GPTQ-INT8\cite{frantar2022gptq} & \textbf{78.57} & \textbf{85.44} & \textbf{18.64} & \textbf{60.52} & \textbf{65.23} \\
\addlinespace[0.3ex]
\quad + Hyper-Compression & \underline{77.72} & \underline{85.17} & \underline{16.34} & 59.24 & \underline{64.35} \\
\bottomrule
\end{tabular}
\label{compare_with_quant_B}
\end{threeparttable}
\end{table*}

As for UNet, MobileNetV3 and {\color{black}ResNet series}, \textbf{our method can compress UNet and Pruned-UNet by $7.87\times$ and $7.05\times$ with the performance loss contained in 1\%}, as shown in Table \ref{table_unet}. Particularly, our method succeeds in combination with other model compression methods such as pruning to achieve an even higher compression ratio. In UNet, \textbf{the total compression ratio is $17.74\times$ with the performance loss $3.41\%$}. {\color{black}Furthermore, we also compare our method with other zero-shot and minimal-data PTQ methods, which allows us to better assess the practical advantages. We first do the comparison in the \textbf{zero-shot} setting. We conduct experiments on the ImageNet validation dataset\cite{krizhevsky2012imagenet} using ResNet-18 and ResNet-50 models against three baselines: ZeroQ\cite{cai2020zeroq}, GDFQ\cite{xu2020generative}, and SQuant\cite{guo2022squant}. As shown in Table \ref{table_unet}, our method achieves compression ratios of 6.21$\times$ and 5.92$\times$ for ResNet-18 and ResNet-50 \cite{he2016deep}, respectively. Notably, these ratios surpass the theoretical maximum of INT8 quantization, although they remain lower than INT4. Moreover, the resulting performance degradation is 0.54\% and 1.34\%, respectively. This accuracy is significantly superior to all INT4 methods and only entails a slightly larger drop compared to INT8. For instance, while ResNet50-SQuant-INT8 incurs a negligible drop of 0.03\%, ResNet50-SQuant-INT4 suffers a severe drop of 6.94\%. This demonstrates that our method achieves a better trade-off between high compression ratios and model performance preservation.}

\textbf{No post-hoc retraining}. Table \ref{data_tokens} compares the number of tokens different compression methods use to retrain the compressed models, which highlights that our Hyper-Compression is zero-shot. This is because discrepancies between the original and decoded parameters in the Hyper-Compression are minor, at magnitudes ranging from $10^{-4}$ to $10^{-3}$. Then, the impact on error accumulation through layer-by-layer propagation is acceptable. This advantage is particularly useful in industry, where often 1) training data are inaccessible, and$/$or curating data is costly; 2) no computing resources are supplied for retraining. {\color{black}To bridge theoretical insights with practical feasibility, we empirically analyze error propagation in deep models. We conduct next-token prediction experiments using four representative models: LLaMA3.2-1B-Instruct, Qwen1.5-7B-Chat, LLaMA2-7B, and DeepSeek-R1-Distill-Qwen-14B and with four fixed input tokens (‘\textit{hyper}', ‘\textit{compression}', ‘\textit{is}', ‘\textit{magic}'). For all models, we compute the Mean Absolute Error (MAE) between the layer-wise outputs of the original and compressed variants across the first 100, 200, 200, and 300 linear layers of  LLaMA3.2-1B-Instruct, Qwen1.5-7B-Chat, LLaMA2-7B, and DeepSeek-R1-Distill-Qwen-14B, respectively. The empirical results provide preliminary validation of our theory, as our theory only describe the error created in a single layer. 

As illustrated in Figure \ref{error_accumulate}, the error accumulation surprisingly exhibits bounded fluctuation without demonstrating progressive accumulation across the network depth, indicating that our compression algorithm enjoys a stable error propagation pattern. We think this is because the trajectory of a dynamic system uniformly fills the entire space. Thus, given the weights, the compression error should also conform to an unbiased distribution, which leads to a canceling effect in Figure \ref{error_accumulate}. As our current work highlights the feasibility of the proposed dynamic system-based model compression, we leave the full explanation of this phenomenon to our future work.}

\begin{figure*}[t] 
    \centering
    \begin{subfigure}[b]{0.40\textwidth}
        \centering
        \includegraphics[width=\linewidth]{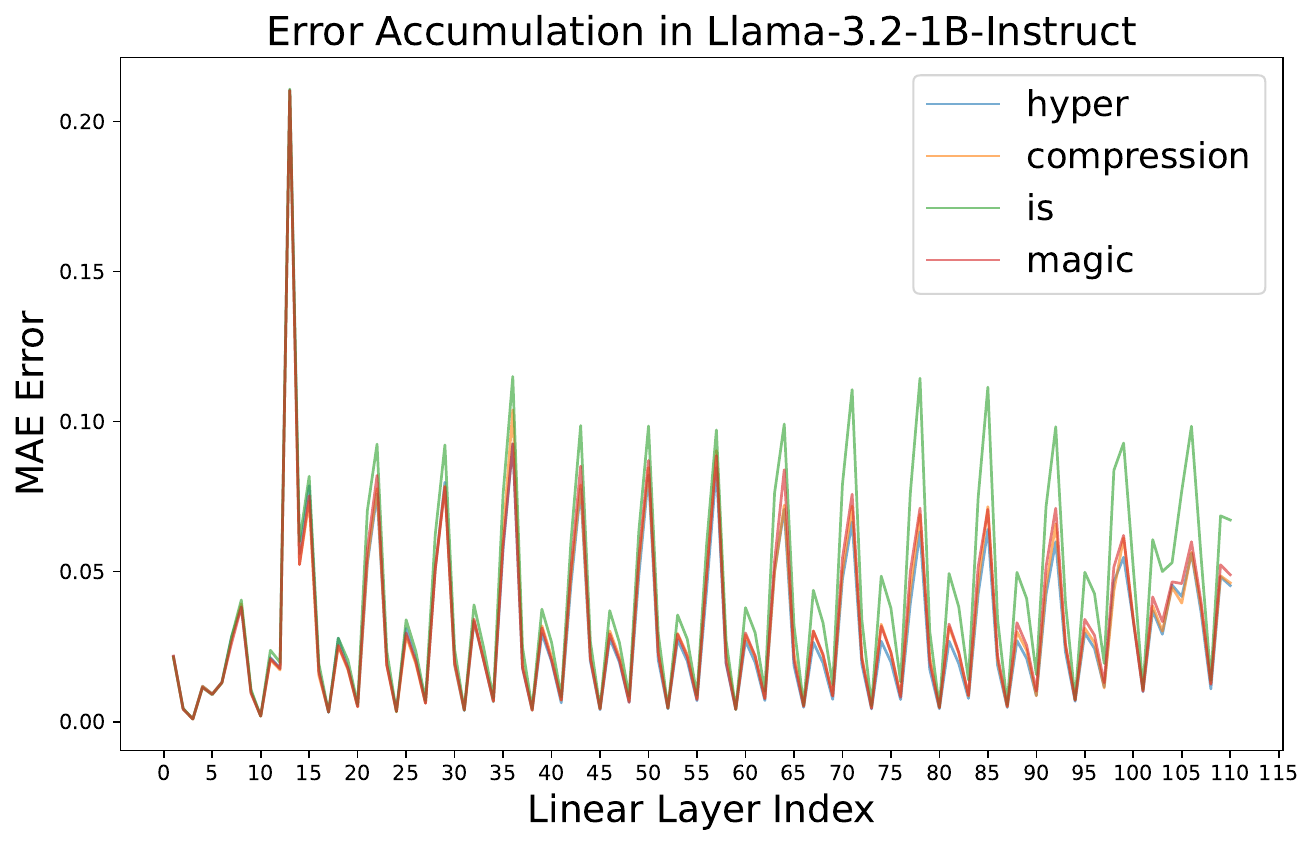}
    \end{subfigure}
    \begin{subfigure}[b]{0.40\textwidth}
        \centering
        \includegraphics[width=\linewidth]{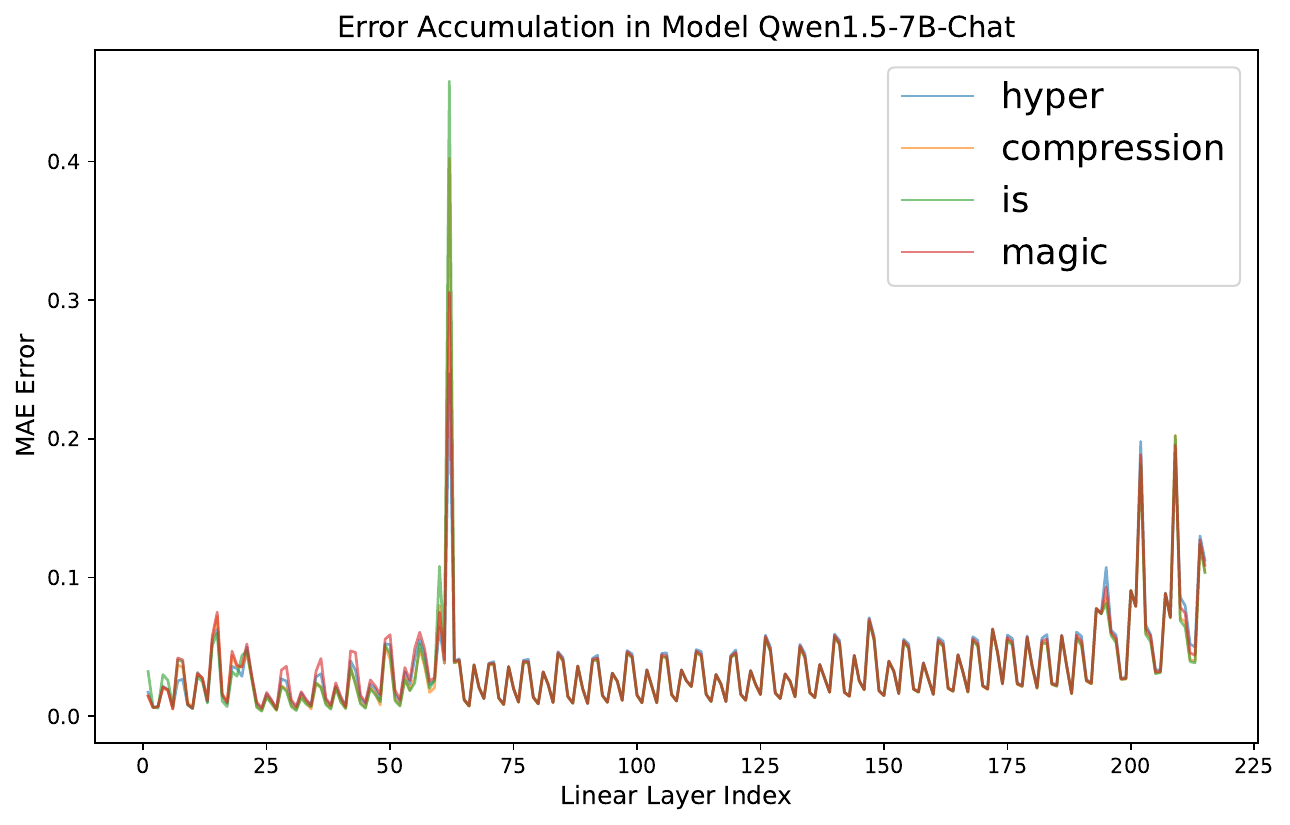}
    \end{subfigure}
    
    \vspace{1em} 
    
    \begin{subfigure}[b]{0.40\textwidth}
        \centering
        \includegraphics[width=\linewidth]{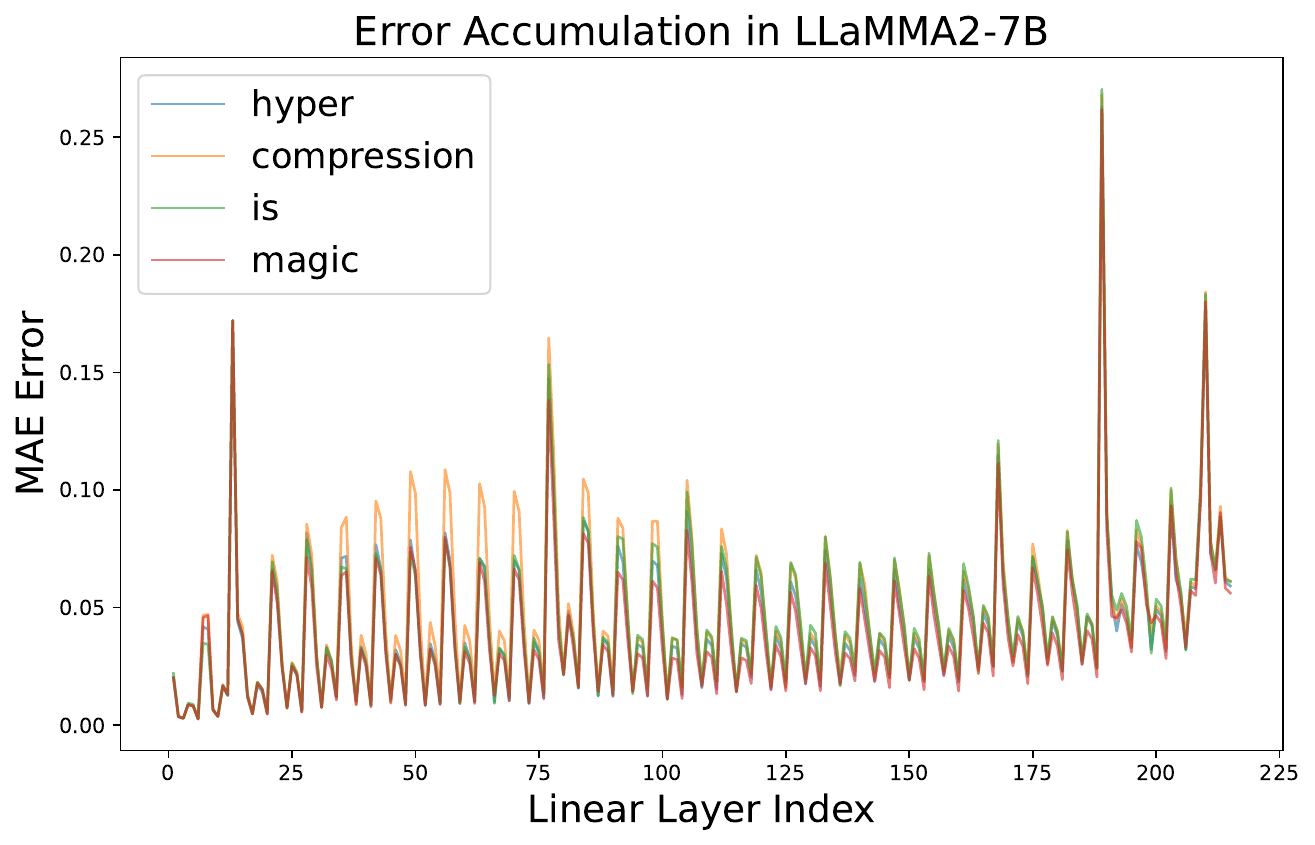}
    \end{subfigure}
    \begin{subfigure}[b]{0.40\textwidth}
        \centering
        \includegraphics[width=\linewidth]{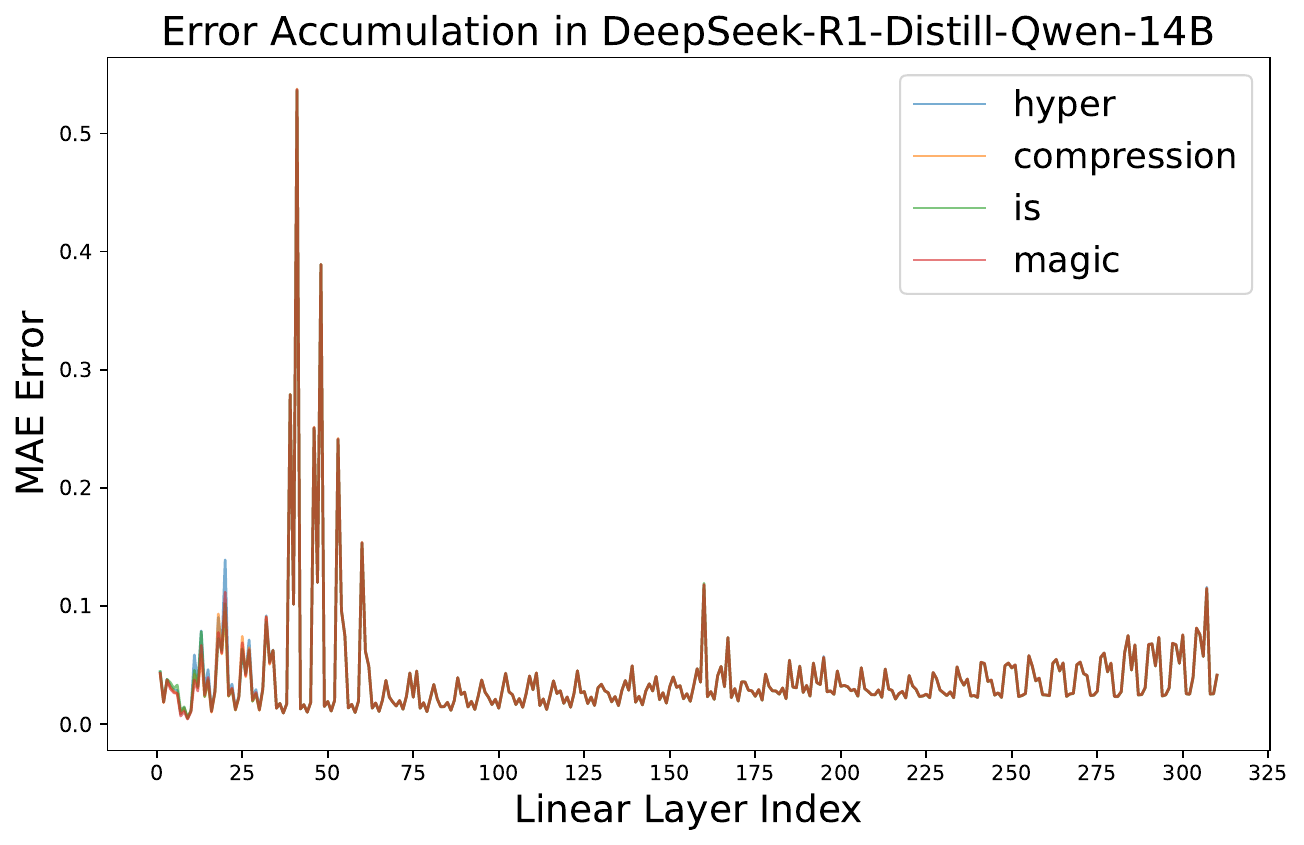}
    \end{subfigure}

    \caption{{\color{black}Layer-wise MAE propagation during next-token prediction on four compressed LLMs (LLaMA3.2-1B, DeepSeek-14B, LLaMA2-7B, Qwen1.5-7B) with fixed tokens (‘\textit{hyper}', ‘\textit{compression}', ‘\textit{is}', ‘\textit{magic}').}}
    \label{error_accumulate}
\end{figure*}

\textbf{Affordable inference time}. A network is a hierarchical structure. To expedite the inference time, our approach is to leverage this hierarchy by parallelizing the decoding and inference processes. The key is to complete the decoding of a layer before using this layer to infer. As shown in Table \ref{table02}, “Original" refers to the time required for a single inference of the original UNet, and “Ours" refers to the time required for one inference by employing parallel decoding and inference processes. It can be seen that the Hyper-Compression only increases the total inference time moderately.

\textbf{Short compression time}.
As Table \ref{table_comtime} shows, our method can compress models very fast. This efficiency primarily stems from using matrix operations for most compression processes and the KD-Tree to expedite the search. Additionally, by treating the compression tasks between layers as independent operations, we implement a parallel strategy to further decrease the compression time. 

\begin{table}[h!]
    \caption{The compression time of our method on UNet, MobileNetV3, LLaMA2-7B, DeepSeek-R1-Distill-Qwen-14B, DeepSeek-R1-Distill-Qwen-32B and Qwen2-72B-Instruct.}
    \begin{center}
    \scalebox{1}{\begin{tabular}{+c ^c}
    \hline
    \textbf{Model} & \textbf{Time (s)} \\
    \hline
    UNet + HC & 30.00 \\
    MobileNetV3 + HC & 11.63\\
    \hline
    \textbf{Model} & \textbf{Time (min)} \\
    \hline
    LlaMA2-7B + HC & 3.75 \\
    \rowstyle{\color{black}} DeepSeek-R1-Distill-Qwen-14B + HC & 7.79 \\
    \rowstyle{\color{black}} DeepSeek-R1-Distill-Qwen-32B + HC & 17.05 \\
    \rowstyle{\color{black}} Qwen2-72B-Instruct + HC & 33.60 \\
    \hline
    \end{tabular}}
    \label{table_comtime}
    \end{center}
    \vspace{-0.5cm}
\end{table}


\vspace{-0.1cm}

\section{Ablation Study and Parameter Sensitivity}
As shown in Table \ref{ablation01}, we conduct ablation experiments based on UNet and MobileNetV3 on the aforementioned two acceleration techniques to evaluate their independent contributions in computational speed. It is seen that both KD-Tree and matrix operations leads to a substantial enhancement in computational efficiency. Notably, the KD-Tree technique becomes more pronounced when the matrix operations are applied; and vice versa.

\begin{figure}[htb]
    \begin{center}
    \includegraphics[width=0.9\linewidth]{figures/talks.pdf}
    \caption{The exemplary outputs from LlaMA2-7B, Sheared-LlaMA-1.3B, TinyLlaMA, LiteLlaMA, and their compressed model by using Hyper-Compression. These results demonstrate that our compression method effectively preserves the models' ability to produce meaningful and grammatically correct text.}
    \label{talks}
    \end{center}
    \vspace{-0.3cm}
\end{figure}


\begin{table}[htb]
\caption{The results of ablation study on acceleration techniques.}
\centering
\scalebox{1}{\begin{tabular}{cccc}
\hline
\multirow{2}{*}{Model} &   K-D & Matrix & Compression\\
& Tree & Operations & Time (min) \\
\hline
\multirow{5}{*}{UNet \cite{ronneberger2015unet}}  & \ding{53} & \ding{53} & 231.6 \\
 & \checkmark & \ding{53} & 176.9 \\
 & \ding{53} & \checkmark & 54.8 \\
 & \checkmark & \checkmark & \textbf{0.5} \\
\hline
\multirow{5}{*}{MobileNetV3 \cite{howard2019searching}}  & \ding{53} & \ding{53} &  52.29 \\
 & \checkmark & \ding{53} & 3.13 \\
 & \ding{53} & \checkmark & 45.50 \\
 & \checkmark & \checkmark & \textbf{0.19} \\
\hline
\end{tabular}}
\label{ablation01}
\end{table}

\begin{figure}[h!] 
    \centering
    \begin{subfigure}[b]{0.43\linewidth}
        \centering
        \includegraphics[width=\linewidth]{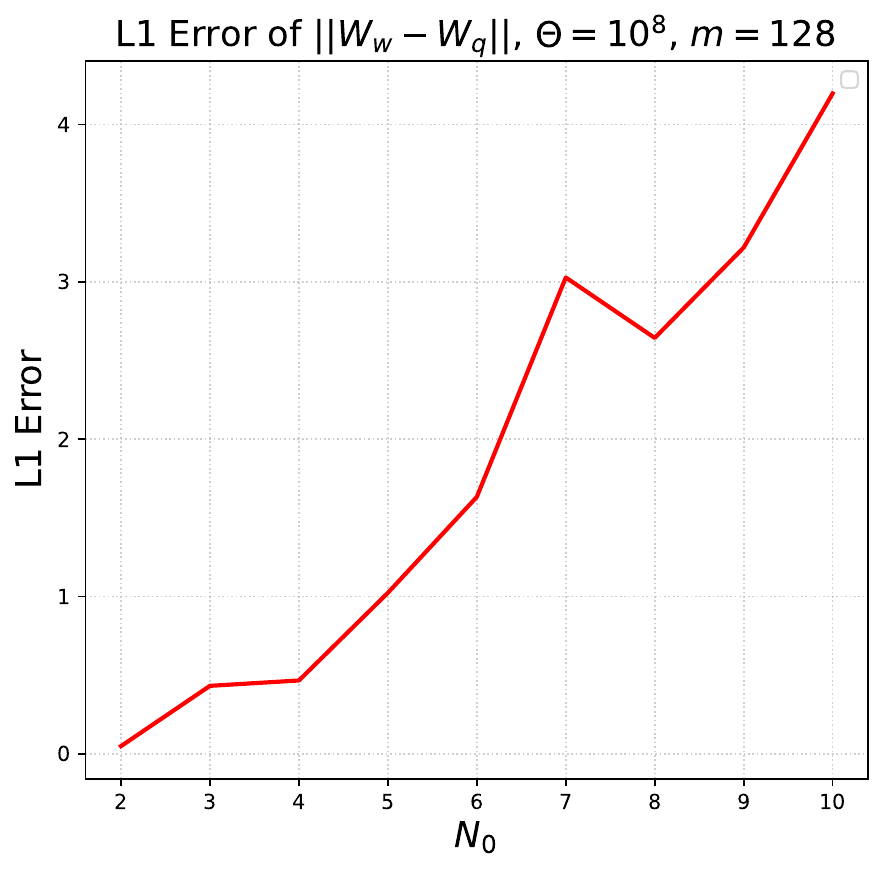}
    \end{subfigure}
    \begin{subfigure}[b]{0.43\linewidth}
        \centering
        \includegraphics[width=\linewidth]{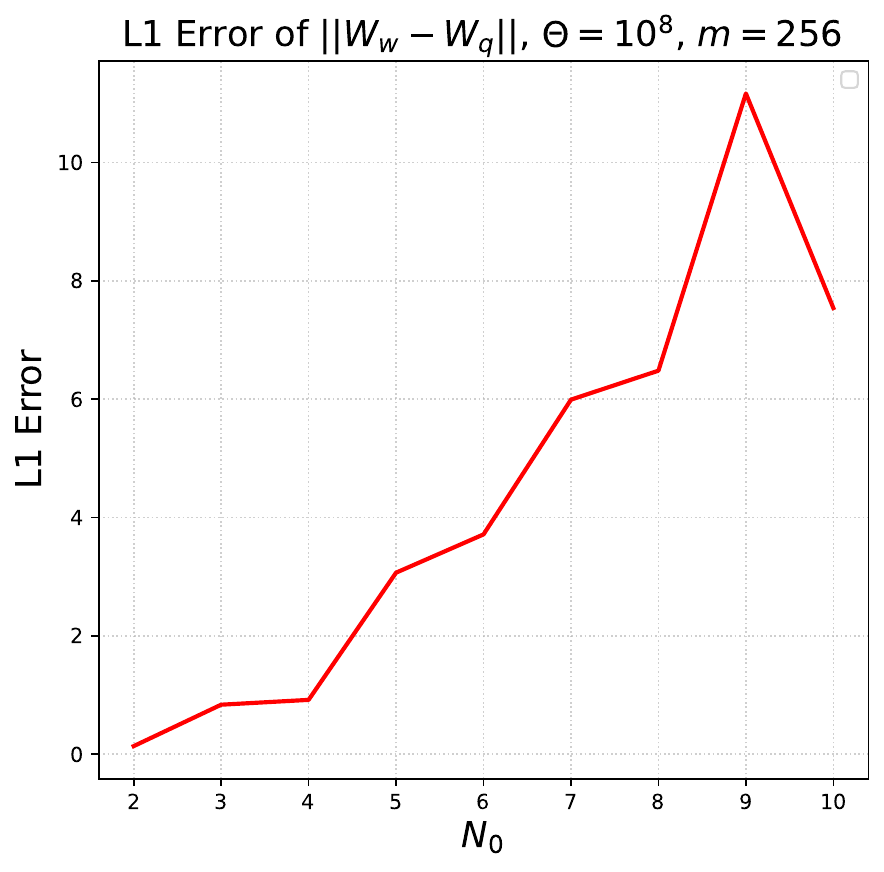}
    \end{subfigure}
    
    \vspace{0.5em} 
    
    \begin{subfigure}[b]{0.43\linewidth}
        \centering
        \includegraphics[width=\linewidth]{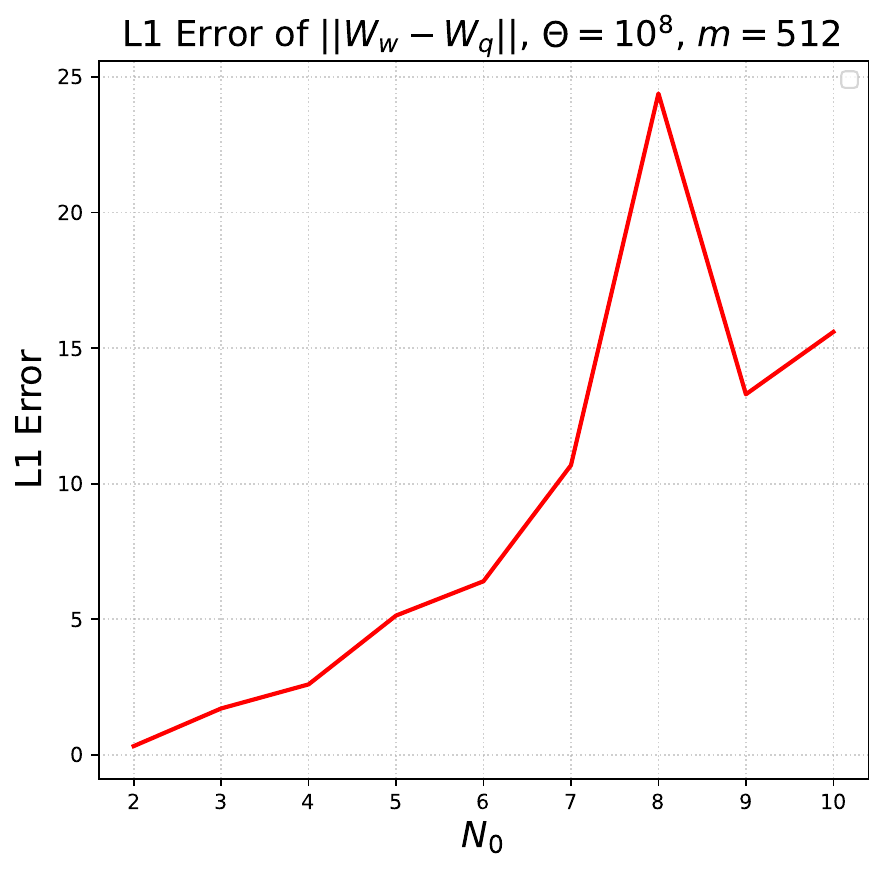}
    \end{subfigure}
    \begin{subfigure}[b]{0.43\linewidth}
        \centering
        \includegraphics[width=\linewidth]{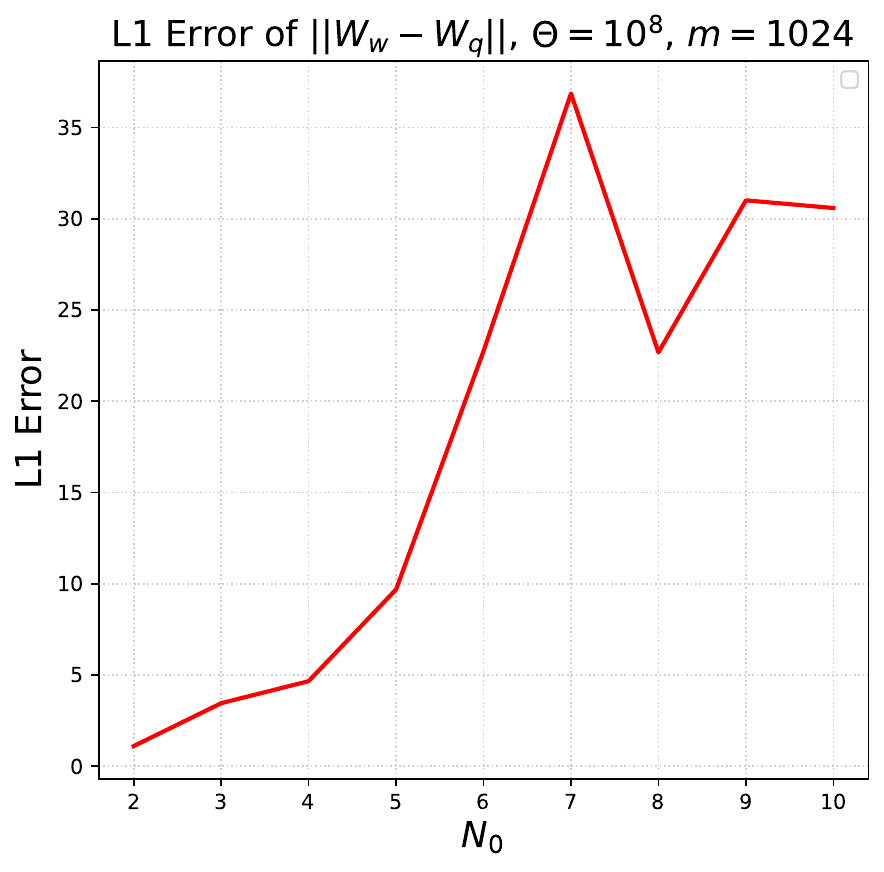}
    \end{subfigure}

    \caption{{\color
{blue}The experiments on synthetic weight matrices under controlled settings show that as $N_0$ increases, the compression error increases.}}
    \label{error_bound}
    \vspace{-0.5cm}
\end{figure}

We use UNet as an example to conduct a parameter sensitivity test on three aforementioned hyperparameters: $M$, $U$, and $l$. $M$ means the maximum value of categories that is set for all layers of the model, $U$ means the list of the number of sample nodes in the center square, and $l$ means the length of the side of the center square. As shown in Table \ref{tab:sensitivity}, different values of $M$, $U$, and $l$ only moderately impact the model's performance and compression ratio, which means our algorithm is robust to hyperparameters. 

\begin{table}[h!]
    \centering
    \caption{Parameter sensitivity results for different hyper-parameters on UNet.}
    \scalebox{0.9}{
    \setlength{\tabcolsep}{3mm}{
    \begin{tabular}{ccccc}
    \hline
     \multicolumn{3}{c}{\textbf{Hyper-Parameters}} & \multirow{2}{*}{\textbf{Dice ($\%$)}} & \multirow{2}{*}{\textbf{Rate}} \\
    \cline{1-3}
      Max Class & $U$ & $l$ \\
     \hline
     -  &-  &-  &99.86 &- \\
     1  &225  &0.02  &99.81 &7.09$\times$ \\
    2  &225  &0.02  &97.07 & 7.08$\times$ \\
     3  &225  &0.02  &90.11 & 7.08$\times$ \\
     \hline
     1  &225  &0.1  &99.21 & 7.92$\times$ \\
     2  &225  &0.1  &96.46 & 7.88$\times$ \\
     \cellcolor{blue01!60}\textbf{3}  &\cellcolor{blue01!60}\textbf{225}  & \cellcolor{blue01!60}\textbf{0.1}  &\cellcolor{blue01!60}\textbf{99.71}  & \cellcolor{blue01!60}\textbf{7.88}$\times$\\
    \hline
     1  &361  &0.02  &99.87 &6.38$\times$ \\
     2  &361  &0.02  &97.90 & 6.38$\times$ \\
     3  &361  &0.02  &99.39 & 6.38$\times$ \\
     \hline
     1  &361  &0.1  &99.28 & 7.73$\times$ \\
     2  &361  &0.1  &99.10 & 7.71$\times$ \\
     3  &361  &0.1  &99.65  & 7.68$\times$ \\
\hline
     1  &225 / 361  &0.02  &99.89 &7.08$\times$ \\
     2  &225 / 361  &0.02  &98.12 & 7.08$\times$ \\
     3  &225 / 361  &0.02  &99.58 & 7.08$\times$ \\
     \hline
     1  &225 / 361  &0.1  &99.08 & 7.73$\times$ \\
     2  &225 / 361  &0.1  &98.85 & 7.71$\times$ \\
     3  &225 / 361  &0.1  &99.51  &7.68$\times$\\
    
    \hline
    \end{tabular}}}
    \label{tab:sensitivity}
    \vspace{-0.2cm}
\end{table}

\begin{table}[h]
    \caption{The data tokens used in pre-training process compared with other models. Our model compression method don't need any pretraining or fine-tuning data.}
    \centering
    \scalebox{0.9}{\begin{tabular}{ccc}
    \hline
    \addlinespace[0.8ex]
    \textbf{Model} & \textbf{Tokens} & \textbf{Data Composition} \\
    \addlinespace[0.8ex]
    \hline
    \addlinespace[0.4ex]
    LLaMA2-7B~\cite{touvron2023llama} & 2T & \emph{Unknown} \\
    \addlinespace[0.4ex]
    \hline
    \addlinespace[0.4ex]
    Sheared-LLaMA-1.3B~\cite{xia2024sheared} & 50B & RedPajama \\
    \addlinespace[0.4ex]
    \hline
    \addlinespace[0.4ex]
    \multirow{3}{*}{TinyLLaMA~\cite{zhang2024tinyllama}} & \multirow{3}{*}{3T} & Slimpajama \\
    &  & + \\
    & & StarCoder \\
    \addlinespace[0.4ex]
    \hline
    \addlinespace[0.4ex]
    LiteLLaMA \footnotemark[1] & 1T & RedPajama \\
    \addlinespace[0.4ex]
    \hline
    \addlinespace[0.4ex]
    \multirow{5}{*}{OPT~\cite{zhang2022opt}} & \multirow{5}{*}{300B} & RoBERTa \\
    & & + \\
    & & The Pile \\
    & & + \\
    & & PushShift.io Reddit \\
    \addlinespace[0.4ex]
    \hline
    \addlinespace[0.4ex]
    Pythia~\cite{biderman2023pythia} & 300B & The Pile \\
    \hline
    \addlinespace[0.4ex]
    INCITE-Base~\footnotemark[2] & 800B & RedPajama \\
    \addlinespace[0.4ex]
    \hline
    \addlinespace[0.4ex]
    OpenLLaMA v1~\cite{geng2023openllama} & 1T & RedPaJama \\
    \addlinespace[0.4ex]
    \hline
    \addlinespace[0.4ex]
    \multirow{5}{*}{OpenLLaMA v2~\cite{geng2023openllama}} & \multirow{5}{*}{1T} & Falcon refined-web \\
    & & + \\
    & & StarCoder \\
    & & + \\
    & & Parts of RedPaJama \\
    \addlinespace[0.4ex]
    \hline
    \addlinespace[0.4ex]
    \bf{Ours} & \bf{0} & \bf{-} \\
    \addlinespace[0.4ex]
    \hline
    \end{tabular}}
    \label{data_tokens}
    \vspace{-0.25cm}
\end{table}

\begin{table}[htb]
    \begin{center}
    \caption{Timing results for different batch sizes on two devices using two methods with a UNet network.}
    \scalebox{0.9}{
    \begin{tabular}{ccccc}
    \hline
    \multirow{2}{*}{\textbf{Batch Size}} & \multicolumn{2}{c}{\textbf{4060}} & \multicolumn{2}{c}{\textbf{A40}} \\
    \cline{2-5}
     & \textbf{Original (s)} & \textbf{Ours (s)} & \textbf{Original (s)} & \textbf{Ours (s)} \\
     \hline
    1 &4.35  &4.43  &1.50  &2.06  \\
    2 &3.95  &4.49  &1.44  &2.05  \\
    4 &3.63  &4.77  &1.45  &2.22  \\
    8 &3.67  &7.94  &1.44  &2.46  \\
    \hline
    \end{tabular}}
    \label{table02}
    \end{center}
    \vspace{-0.5cm}
\end{table}

{\color{black}
\section{Discussion}
\textbf{The choice of hyperfunction for Hyper-Compression}. We choose the irrational winding on the torus as our hyperfunction because of its high computational efficiency. The irrational winding as a candidate hyperfunction is extremely simple and efficient to compute: $w_n = \tau(\theta\cdot\alpha_n)$, where $\tau(z)=z-\lfloor z \rfloor$, and $\alpha_n$ is irrational. This means that to recover weights of the original network, we only need a single multiplication and modulo operation per weight, which is instrumental in reducing the delay in inference. Meanwhile, finding $\theta$ does not need to deal with a partial differential equation (PDE). Thus, we can leverage matrix operations to compress and decompress entire layers in parallel, thereby allowing our method to scale to multi-billion parameter models, \textit{e.g.}, compressing LLaMA2-70B in 33.6 minutes (please see Table \ref{table_comtime}).

Our work establishes a relationship between the ergodicity of the dynamic system and model compression. Other famous ergodic dynamic systems should also have the potential of compression: i) Chaotic Maps \cite{alligood1998chaos}: While they can produce dense trajectories, they often exhibit sensitivity to initial conditions, rendering it hard to determine the initial point. Moreover, chaotic equations or formulas are serial, which need a long time to recover the weights, creating a delay in network inference. ii) Linear Congruential Generators (LCGs, \cite{boyar1989inferring}): They are also serial systems, and are hard to parallelize, facing the same issue. iii) Quasi-Monte Carlo Sequences \cite{caflisch1998monte}: systems with faster mixing rates could potentially achieve lower reconstruction errors for the same trajectory length. However, they typically require storing a base sequence or complex generation rules, making them hard to scale to large models.

We acknowledge that other dynamic systems might offer advantages in specific scenarios and are an exciting direction for future work. However, it can be seen that these systems face the challenge of scalability, which is the key matter in the era of large models. In contrast, the irrational winding strikes a good balance among strong scalability and competitive performance across diverse models.

\textbf{Discussion of theory}. Theorem \ref{theorem_4} provides a probabilistic upper bound on the approximation error for a single linear layer. This bound is derived using concentration inequalities and the properties of irrational winding. We emphasize that this bound is 
\begin{itemize}
    \item \textit{Asymptotic}: It describes the dynamic behavior of the compression error as $N_0$ changes.
    \item \textit{Worst-case}: It holds for the worst-case input distribution (uniform on the sphere) and does not fully account for the structure in real data.
\end{itemize}
In practice, we observe that the actual errors are significantly lower than the theoretical bound. This is because we employ engineering techniques like scaling and KD-tree-based trajectory assignment that can minimize the reconstruction error by adapting the hyperfunction to the weight distribution, particularly dealing with outliers appropriately.

However, we think that our theoretical derivation is meaningful in terms of informing how the key factors determine the recovering error. To evaluate the role of $N_0$, we conduct experiments on synthetic weight tensors under controlled settings: We fix $\Theta = 10^8$ and sample input data $\mathbf{X} \in [0,1]^{m\times N_0}$ from a uniform distribution. For each row of $\mathbf{X}$, we find the corresponding $\theta$. Four configurations are evaluated with $m \in \{128, 256, 512, 1024\}$. For each $N_0$, we compute the mean L1 error across 100 independent trials. As demonstrated in Figure \ref{error_bound}, the L1 error roughly exhibits an increasing trend with a growing $N_0$, which agrees with our theoretical prediction. 


}

\section{Conclusion and Future Work}
In this study, we have proposed Hyper-Compression, a novel and general-purpose methodology for model compression that leverages the trajectory density of some dynamic system to encode the parameters of the target network. We have conducted comprehensive and systematic experiments on various architectures, including LlaMA series, Qwen series, ResNet series, UNet, and MobileNetV3, demonstrating that our method is both user-friendly and scalable. Nevertheless, it represents merely an initial foray into the whole landscape. Future directions include {\color{black}1) developing a fused decode-compute method that integrates weight decoding directly into the matrix multiplication kernel for inference speedup}; 2) beyond classical theory, exploring other possibilities such as implicit neural representation \cite{chitturi2023capturing} to strike a better balance between compression time, inference time, and compression ratio; 3) considering distributions of weights of the target network to secure a higher compression rate. We believe that Hyper-Compression can contribute to Moore's law of model compression in the near future, \textit{i.e.}, the compression efficiency can be doubled annually, as a solution for the stagnation of hardware Moore's law.


\vspace{-0.45cm}
\renewcommand\refname{Reference}





%

\bibliographystyle{ieeetr}
\bibliography{pnas-sample.bib}

\vspace{-0.9cm}
\begin{IEEEbiography}[{\includegraphics[width=1in,height=1.25in,clip,keepaspectratio]{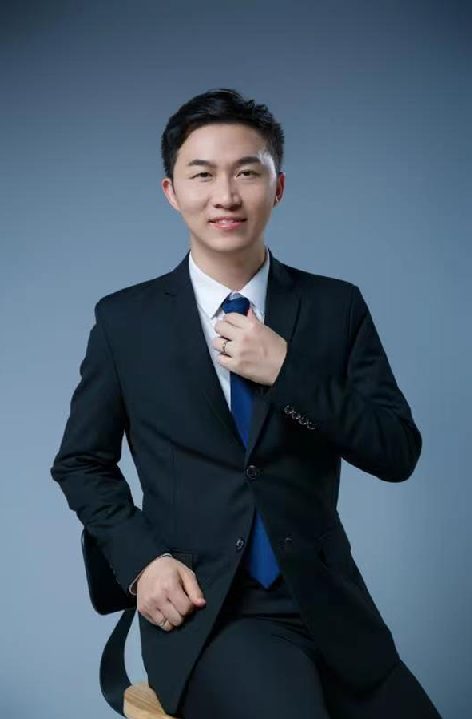}}]{Fenglei Fan} is currently an Assistant Professor with Department of Data Science, City University of Hong Kong. His primary research interests lie in NeuroAI and its applications in model compression and medical imaging. He was the recipient of the IBM AI Horizon Scholarship. His PhD dissertation was also selected as the recipient of the 2021 International Neural Network Society Doctoral Dissertation Award. His paper was selected as one of few 2024 CVPR Best Paper Award Candidates. He also won the IEEE Nuclear and Plasma Society IEEE TRPMS Best Paper Award. 
\end{IEEEbiography}

\vspace{-0.9cm}
\begin{IEEEbiography}[{\includegraphics[width=1in,height=1.25in,clip,keepaspectratio]{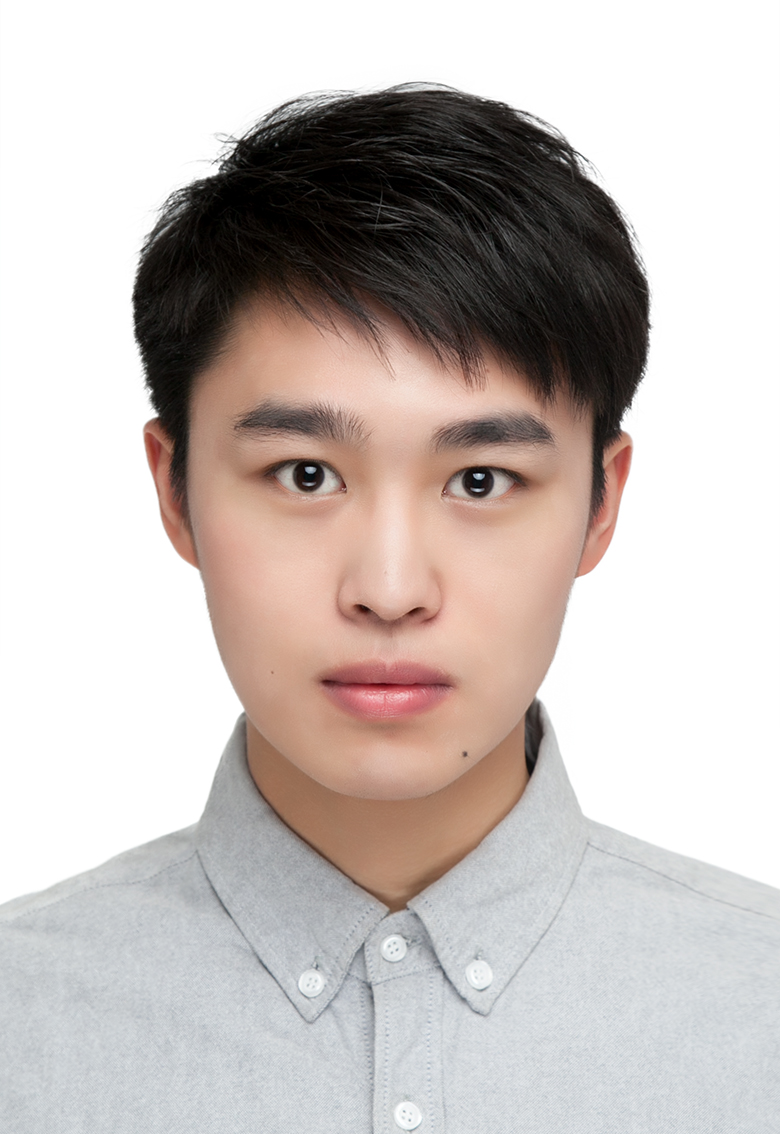}}]{Juntong Fan} is a Ph.D. student in the Department of Data Science at City University of Hong Kong. He holds dual Master's degrees: one in Mathematics from The Chinese University of Hong Kong and another in Social Computing from Xi'an Jiaotong-Liverpool University. His research primarily focuses on model compression theory and methodology. Prior to his doctoral studies, he served as a Research Assistant at CUHK from 2023 to 2024.
\end{IEEEbiography}

\vspace{-0.9cm}
\begin{IEEEbiography}[{\includegraphics[width=1in,height=1.25in,clip,keepaspectratio]{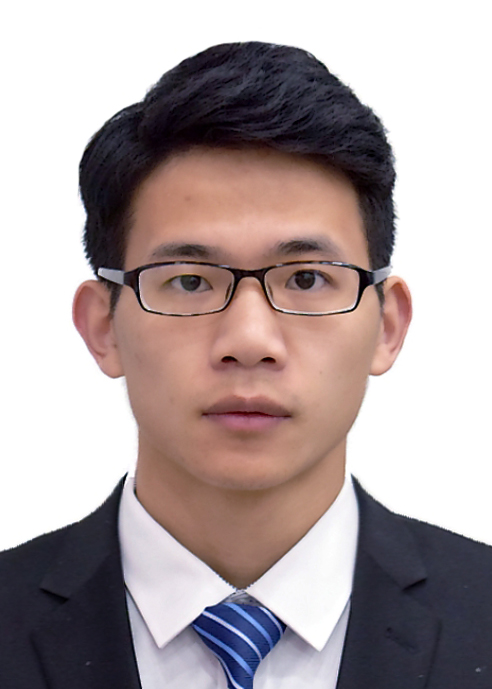}}]{Dayang Wang} is a visiting scholar at Department of Data Science, City University of Hong Kong. He earned his Ph.D. in Computer Engineering from the University of Massachusetts Lowell, where he was supervised by Prof. Hengyong Yu (IEEE Fellow). Prior to that, he received his bachelor's degree in Computer Science from Beijing Normal University. His current research primarily focuses on the development of machine learning models and their applications in medical imaging.
\end{IEEEbiography}

\vspace{-0.9cm}
\begin{IEEEbiography}[{\includegraphics[width=1in,height=1.25in,clip,keepaspectratio]{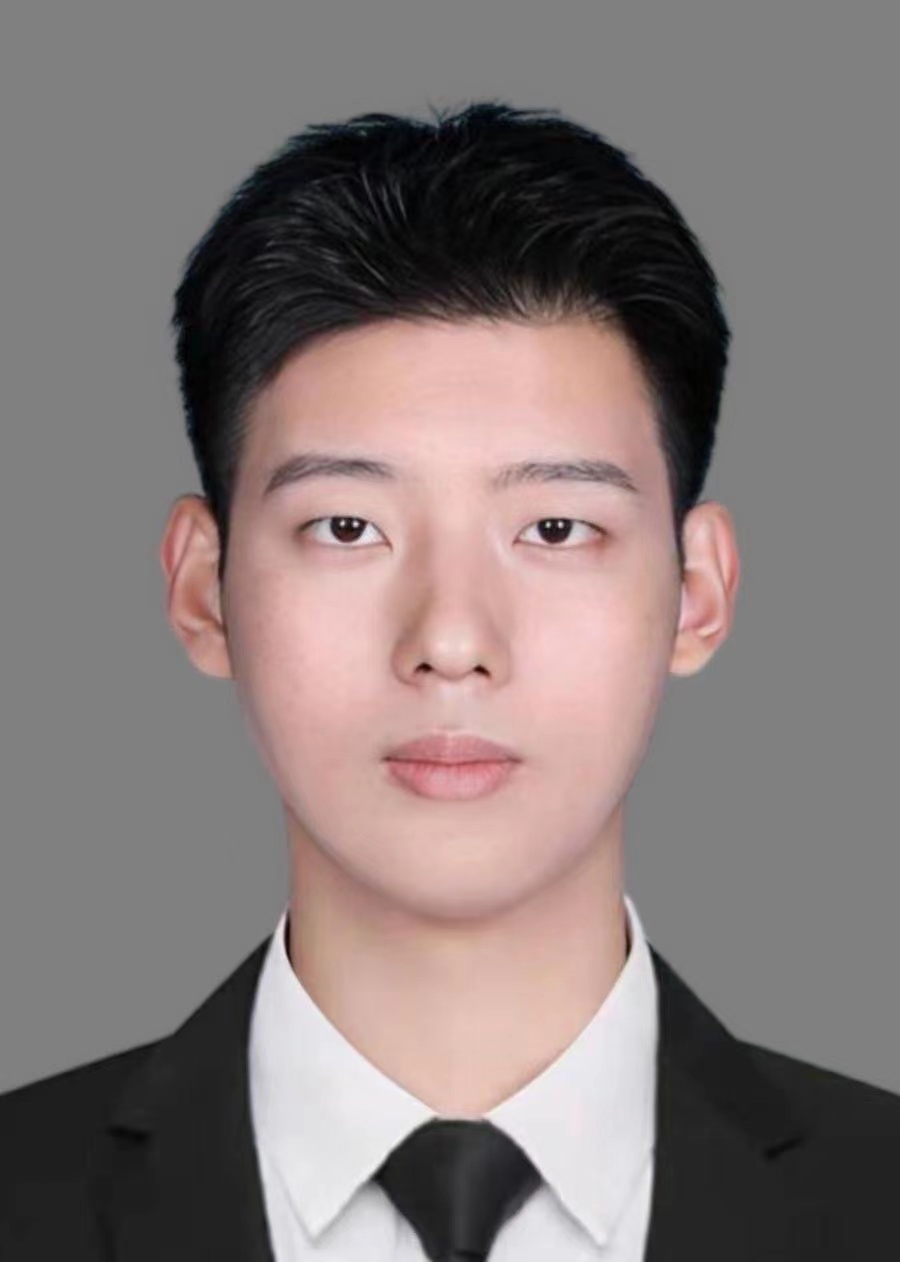}}]{Jingbo Zhang} is a Master’s student at Northeastern University, China. Prior to this, he also finished his undergraduate study in Northeastern University in 2021. In 2024, he was invited to work as a Research Assistant at the Department of Mathematics, The Chinese University of Hong Kong. Now, he is a remote student intern at Department of Data Science, City University of Hong Kong. His research interest is model compression.
\end{IEEEbiography}


\vspace{-0.9cm}
\begin{IEEEbiography}[{\includegraphics[width=1in,height=1.25in,clip,keepaspectratio]{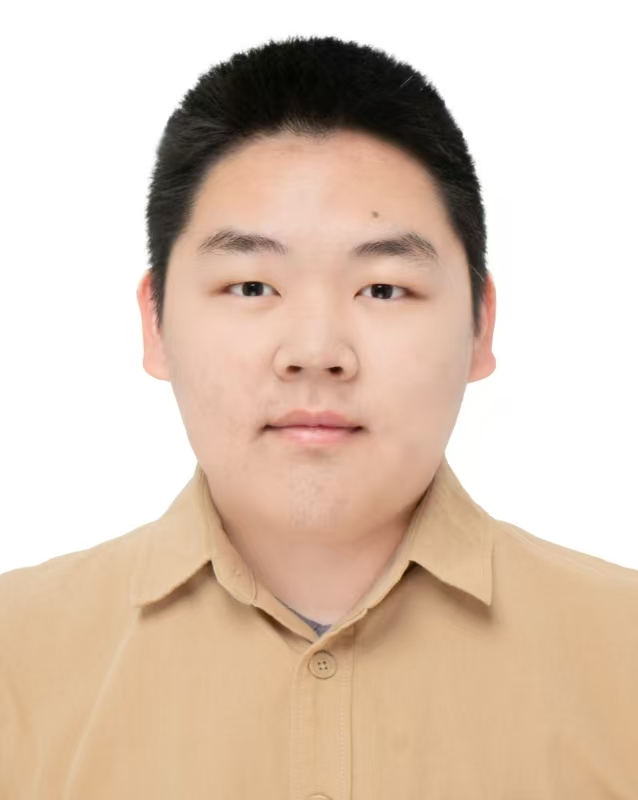}}]{Zelin Dong} is a Ph.D. student at the Department of Mathematics at the Chinese University of Hong Kong, having completed his undergraduate degree there in 2025. As a mentee in Prof. Fenglei Fan’s team (2022-2024), he conducted research in discrete computational geometry and received the UROP Gold Award in 2023. His research interests include partial differential equations and data science.
\end{IEEEbiography}

\vspace{-0.9cm}
\begin{IEEEbiography}[{\includegraphics[width=1in,height=1.25in,clip,keepaspectratio]{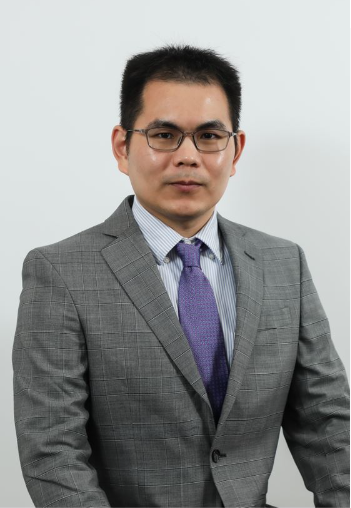}}]{Shijun Zhang} is an Assistant Professor at The Hong Kong Polytechnic University (PolyU). He graduated with a bachelor’s degree in mathematics from Wuhan University in 2016. Subsequently, he pursued his doctoral degree at National University of Singapore under the supervision of Professor Zuowei Shen and Professor Haizhao Yang, and successfully obtained his PhD in 2021. He used to work as a postdoctoral researcher at Duke University under the guidance of Professor Jianfeng Lu and Professor Hongkai Zhao. His primary research interest is in the (approximation) error analysis of neural networks.
\end{IEEEbiography}

\vspace{-0.9cm}
\begin{IEEEbiography}[{\includegraphics[width=1in,height=1.5in,clip,keepaspectratio]{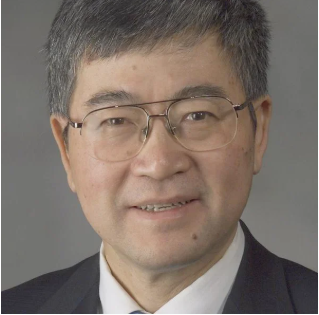}}]{Ge Wang} (Fellow of IEEE, SPIE, AAPM, OSA, AIMBE, AAAS, and NAI) is the Clark \& Crossan Chair Professor and Director of Biomedical Imaging Center at Rensselaer Polytechnic Institute (Troy, New York, USA). He pioneered the spiral cone-beam CT method in 1991 and published the first perspective on deep learning-based tomographic imaging in 2016. His honors include the IEEE EMBS Career Achievement Award, SPIE Meinel Technology Award, Sigma Xi Chubb Award for Innovation, RPI Wiley Distinguished Faculty Award, IEEE R1 Outstanding Teaching Award, and IEEE NPSS/NMISC Hoffman Medical Imaging Scientist Award.
\end{IEEEbiography}

\vspace{-0.9cm}
\begin{IEEEbiography}[{\includegraphics[width=1in,height=1.25in,clip,keepaspectratio]{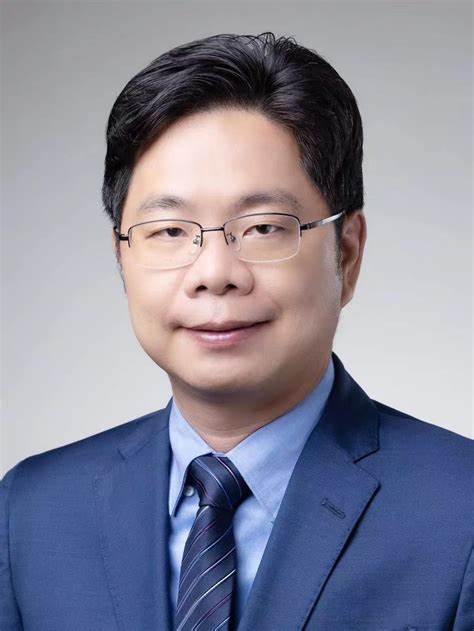}}]{Tieyong Zeng} received the B.S. degree from Peking
University, Beijing, China, in 2000, the M.S degree from Ecole Polytechnique, Palaiseau, France, in 2004, and the Ph.D. degree from the University
of Paris XIII, Paris, France, in 2007. He is
currently a Professor in the Department of Mathematics, The Chinese University of Hong Kong, Hong Kong. His research interests are image processing, machine learning, and
scientific computing.
\end{IEEEbiography}

\end{document}